\documentclass[10pt]{article} 
\usepackage[preprint]{tmlr}


\usepackage[hidelinks]{hyperref}
\usepackage{url}

\usepackage{caption}
\usepackage{subcaption}
\usepackage{amsfonts}
\usepackage{amsmath,mathtools}
\usepackage{amsthm}
\usepackage{amssymb}
\usepackage{algorithm}

\usepackage{array}
\usepackage{algorithm}
\usepackage{etoolbox}

\let\classAND\AND
\let\AND\relax
\usepackage{algorithmic}

\let\AND\classAND
\AtBeginEnvironment{algorithmic}{\let\AND\algoAND}

\floatname{algorithm}{Algorithm}

\usepackage{color}

\newtheorem{theorem}{Theorem}

\newcommand{\dataset}{\mathcal{D}}
\newcommand{\cmodel}{\mathcal{C}}
\newcommand{\numensembles}{n_{\text{ensembles}}}
\newcommand{\numruns}{n_{\text{runs}}}
\newcommand{\numsteps}{n_{\text{steps}}}
\newcommand{\numcutoff}{n_{\text{cutoff}}}
\newcommand{\numdata}{{n_{\text{data}}}}

\makeatletter
\newcommand{\printfnsymbol}[1]{%
	\textsuperscript{\@fnsymbol{#1}}%
}
\makeatother

\title{No More Pesky Hyperparameters: \\Offline Hyperparameter Tuning for RL}


\author{%
	\begin{tabular}{c c c c c} 
		\name Han Wang  \thanks{Computing Science, Alberta Machine Intelligence Institute (Amii), University of Alberta, Edmonton, Alberta, Canada.} \thanks{These authors contributed equally to this work.} & 
		\name Archit Sakhadeo \printfnsymbol{1}\printfnsymbol{2} & 
		\name Adam White \printfnsymbol{1} & 
		\name James Bell \printfnsymbol{1} & 
		\name Vincent Liu \printfnsymbol{1} \\ 
		\email han8@ualberta.ca & 
		\email sakhadeo@ualberta.ca & 
		\email amw8@ualberta.ca & 
		\email jbell1@ualberta.ca & 
		\email vliu1@ualberta.ca \end{tabular} \AND
	\begin{tabular}{c c c c c} 
		\name Xutong Zhao \printfnsymbol{1} & 
		\name Puer Liu \printfnsymbol{1} & 
		\name Tadashi Kozuno \printfnsymbol{1} & 
		\name Alona Fyshe \printfnsymbol{1} & 
		\name Martha White \printfnsymbol{1} \\ 
		\email xutong@ualberta.ca & 
		\email puer@ualberta.ca & 
		\email kozuno@ualberta.ca & 
		\email alona@ualberta.ca & 
		\email whitem@ualberta.ca \end{tabular} 
}



\begin{document}

\maketitle

\begin{abstract}
The performance of reinforcement learning (RL) agents is sensitive to the choice of hyperparameters. In real-world settings like robotics or industrial control systems, however, testing different hyperparameter configurations directly on the environment can be financially prohibitive, dangerous, or time consuming. We propose a new approach to tune hyperparameters from offline logs of data, to fully specify the hyperparameters for an  RL agent that learns online in the real world. The approach is conceptually simple: we first learn a model of the environment from the offline data, which we call a calibration model, and then simulate learning in the calibration model to identify promising hyperparameters. We identify several criteria to make this strategy effective, and develop an approach that satisfies these criteria. We empirically investigate the method in a variety of settings to identify when it is effective and when it fails.
\end{abstract}

\section{Introduction}\label{sec_intro}

Reinforcement learning (RL) agents are extremely sensitive to the choice of hyperparameters that regulate speed of learning, exploration, degree of bootstrapping, amount of replay and so on. The vast majority of work RL is focused on new algorithmic ideas and improving performance---in both cases assuming near-optimal hyperparameters. Indeed the vast majority of empirical comparisons involve well-tuned implementations or reporting the best performance after a hyperparameter sweep. Although progress has been made towards eliminating the need for tuning with adaptive methods \citep{white2016greedy,xu2018metagradient,mann2016adaptive,zahavy2020self,jacobsen2019meta,kingma2014adam,papini2019smoothing}, widely used agents employ dozens of hyperparameters and tuning is critical to their success \citep{henderson2018deep}.

The reason domain specialization and hyperparameter sweeps are possible---and perhaps why our algorithms are so dependent on them---is because most empirical work in RL is conducted in simulation. Simulators are critical for research because they facilitate rapid prototyping of ideas and extensive analysis. On the other hand, simulators allow us to rely on features of simulation not possible in the real world, such as exhaustively sweeping different hyperparameters. Often, it is not acceptable to test poor hyperparameters on a real system that could cause serious failures. In many cases, interaction with the real system is limited, or in more extreme cases, only data collected from a human operator is available.
Recent experiments confirm significant sensitivity to hyperparameters is exhibited on real robots as well~\citep{mahmood2018benchmarking}. It is not surprising that one of the major roadblocks to applied RL is extreme hyperparameter sensitivity.  

Fortunately, there is an alternative to evaluating algorithms on the real system: using previously logged data under an existing controller (human or otherwise). This offline data provides some information about the system, which could be used to evaluate and select hyperparameters without interacting with the real world. Hyperparameters are general, and can even include a policy initialization that is adjusted online. We call this problem \emph{Data2Online}.

This problem setting differs from the standard offline or batch RL setting because the goal is to select \emph{hyperparameters} offline for the agent to use to {\em learn online in deployment}, as opposed to learning a \emph{policy} offline. Typically in offline RL a policy is learned on the batch of data, using a method like Fitted Q Iteration (FQI) \citep{ernst2005treebased,riedmiller2005neural,farahmand2009regularized}, and the resulting fixed policy is deployed. Our setting is less stringent, as the policy is learned and continually adapts during deployment. Intuitively, selecting just the hyperparameters for further online learning should not suffer from the same hardness problems as offline RL (see \citep{wang2020what} for hardness results), because the agent has the opportunity to gather more data online and adjust its policy. Even in the offline batch RL setting the hyperparameters of the learner must be set and most approaches either unrealistically use the real environment to do so \citep{wu2019behavior} or use the action values learned from the batch to choose amongst settings \citep{paine2020hyperparameter} with mixed success.

We propose a novel strategy to use offline data for selecting hyperparameters. The idea is simple: we use the data to learn a \emph{calibration model}, and evaluate hyperparameters in the calibration model. 
Learning online in the calibration model mimics learning in the environment, and so should identify hyperparameters that are effective for online learning performance. The calibration model need not be a perfect simulator to be useful for identifying reasonable hyperparameters, whereas learning a transferable policy typically requires accurate models. 

\begin{figure}[th]
	\centering
	\includegraphics[width=0.9\textwidth]{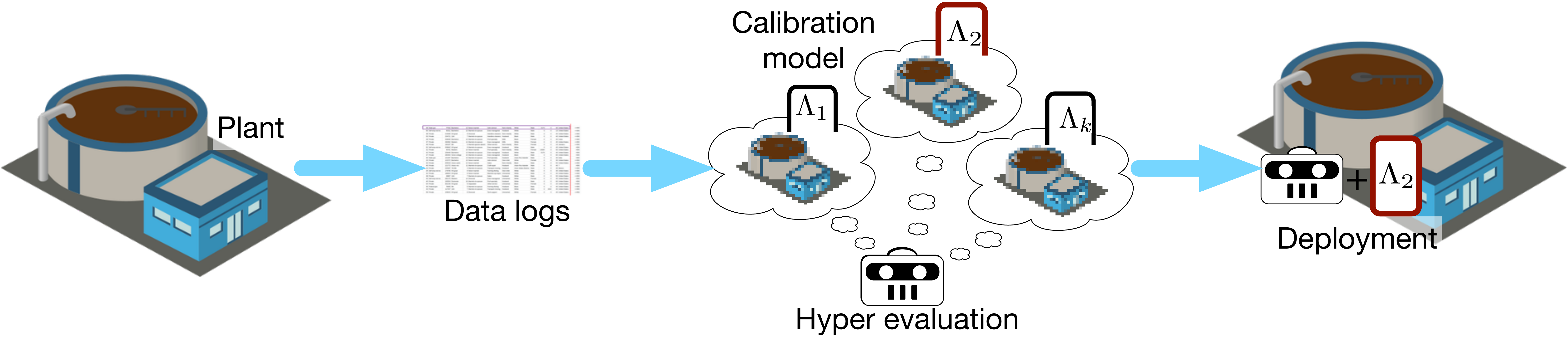}
	\caption{\label{plant_diagram}{\footnotesize
			Data2Online: each hyperparameter setting is denoted by $\Lambda$. The plant imagined by the agent (via calibration model) is intentionally pixilated to reflect approximation of the true plant.}
	}  
\end{figure}
Consider designing a learning system for controlling a water treatment plant, given only a set of data logs visualized in Figure \ref{plant_diagram}. We want an agent to control pump speeds, mixing tanks, and chemical treatments to clean the water with minimal energy usage---but how do we set the learning rate and other hyperparameters of this agent? We can learn a calibration model offline from data logs previously collected while human operators controlled the plant. The calibration model can be treated like any simulator to develop a learning system, including setting the hyperparameters for learning in deployment. 

Though this is a simple and natural idea, to the best of our knowledge, it is the first general approach for Data2Online. It is common in reinforcement learning to learn models for offline policy evaluation or for planning. These approaches, however, do not need to tackle a key problem we consider in this work: iterating the model for thousands of learning steps. There is only one other work considering how to use offline data to evaluate an online agent \citep{mandel2016offline}, but it is only effective for short horizon problems.

In this paper, we first introduce our Data2Online strategy, and outline conditions on the calibration model and learning agents for this strategy to be effective. We bound the difference in the value in the real environment of the hyperparameters chosen in the calibration model, to the true best hyperparameters, in terms of the calibration model error and length of interaction. We then develop a non-parametric calibration model, based on k-nearest neighbors and a carefully chosen distance metric, that satisfies the conceptual criteria needed for the calibration model. We investigate the approach in three problems with different types of learning dynamics, two different learning agents, under different offline data collection policies, and with ablations on the key components of our proposed calibration model. We conclude by highlighting that grid search can be replaced with any hyperparameter optimization algorithm, and that this further improves performance in the real environment.

\section{Related Problem Settings}\label{sec_setting}
Offline RL involves learning from a dataset, but with a different goal: to deploy a fixed (near-optimal) policy. As a result, the hyperparameter selection approaches for offline RL are quite different. One strategy that has been used is to evaluate different policies corresponding to different hyperparameter settings, under what has been introduced as the Offline Policy Selection problem \citep{yang2020offline}. This setting differs from our setting in that they evaluate the utility of the learned policy, rather than the utility of hyperparameters for learning online, in deployment. Some work in offline RL examines learning from data, and then adapting the policy online \citep{ajay2020opal,yang2021representation,lee2021offlinetoonline}, including work that alternates between data collection and high confidence policy evaluation \citep{chandak2020towards,chandak2020optimizing}. Our problem is complementary to these, as a strategy is needed to select their hyperparameters.


In Sim2Real the objective is to construct a high fidelity simulator of the deployment setting, and then transfer the policy and, in some cases, continue to fine tune in deployment. 
We focus on learning the calibration model from collected data, whereas in Sim2Real the main activity is designing and iterating on the simulator itself \citep{peng2018sim}. Again, however, approaches for Data2Online are complementary, and even provide another avenue to benefit from the simulator developed in Sim2Real to pick hyperparameters for fine tuning. 

Domain adaptation in RL involves learning on a set of source tasks, to transfer to a target task. The most common goal has been to enable zero-shot transfer, where the learned policy is fixed and deployed in the target task \citep{pmlr-v70-higgins17a,xing2021domain}. Our problem has some similarity to domain adaptation, in that we can think of the calibration model as the source task and the real environment as the target task. Domain adaptation, however, is importantly different than our Data2Online problem: (a) in our setting we train in a \emph{learned} calibration model not a real environment, and need a mechanism to learn that model (b) the relationship between our source and target is different than the typical relationship in domain adaptation
and (c) our goal is to select and transfer hyperparameters, not learn and transfer policies.

Learning from demonstration (LfD) and imitation learning involve attempting to mimic or extract a policy at least as good as a demonstrator. If the agent is learning to imitate online, then it is unrealistic to assume the demonstrator would generate enough training data required to facilitate hyperparameter sweeps. If the learner's objective is to imitate from a dataset, then this is exactly the problem study of this paper. Unfortunately, hyperparameter tuning in LfD is usually not addressed; instead it is common to use explicit sweeps \citep{merel2017learning, barde2020adversarial,ghasemipour2020divergence,behbahani2019learning} or manual, task-specific tuning \citep{finn2017one}. Hyperparameter tuning, however, is a major obstacle to the deployment of LfD methods \citep{ravichandar2020recent} 

Finally, there is a large literature on hyperparameter selection in RL. Most introduce meta algorithms that learn hyperparameters, including work on meta-descent for stepsizes \citep{sutton1992adapting,xu2018metagradient,jacobsen2019meta} and selecting the trace parameter \citep{downey2010temporal,mann2016adaptive,white2016greedy}. These algorithms could be beneficial for offline hyperparameter selection, because they help reduce sensitivity to hyperparameters; but they are not a complete solution as they still have hyperparameters to tune. Other work has provided parameter-free methods that have theoretically defined formulas for hyperparameters \citep{papini2019smoothing}. Deriving such algorithms is important, but is typically algorithm specific and requires time to extend to broader classes of algorithms, including new advances; it remains useful to consider how to tune hyperparameters for a problem. Finally, recent work has examined online hyperparameter selection, using off-policy learning to assess the utility of different hyperparameters in parallel \citep{paul2019fast,tang2020online}. Otherwise, much of the work has been focused on settings where it is feasible to obtain multiple runs under different hyperparameters---such as in simulation---with the goal to improve on simple grid search \citep{srinivas2010gaussian,bergstra2012random,snoek2012practical,li2018hyperband,jaderberg2017population,falkner2018bohb,parker-holder2020provably}. 

\section{Problem Formulation}\label{sec_formulation}


In RL, an {agent} learns to make decisions through interaction with an {environment}. 
We formulate the problem as a Markov Decision Process (MDP), described by the tuple $(\mathcal{S}, \mathcal{A}, \mathcal{R}, \mathcal{P})$. $\mathcal{S}$ is the state space and $\mathcal{A}$ the action space. $\mathcal{R}: \mathcal{S}\times \mathcal{A} \times \mathcal{S} \rightarrow \mathbb{R}$ is the reward, a scalar returned by the environment. 
The transition probability $\mathcal{P}: \mathcal{S}\times \mathcal{A} \times \mathcal{S}\rightarrow [0,1]$ describes the probability of transitioning to a state, for a given state and action. On each discrete timestep $t$ the agent selects an action $A_t$ in state $S_t$, the environment transitions to a new state $S_{t+1}$ and emits a scalar reward $R_{t+1}$.

The agent's objective is to find a policy that maximizes future reward. A policy $\pi: S \times A \rightarrow [0, 1]$ defines how the agent chooses actions in each state. The objective is to maximize future discounted reward or the \emph{return}, 
$G_t \doteq R_{t+1} + \gamma_{t+1} G_{t+1}$
for a discount $\gamma_{t+1} \in [0,1]$ that depends on the transition $(S_t, A_t, S_{t+1})$~\citep{white2017unifying}. For continuing problems, the discount may simply be a constant less than 1. For episodic problems the discount might be 1 during the episode, and become zero when $S_t, A_t$ leads to termination. Common approaches to learn such a policy are Q-learning and Expected Sarsa, which approximate the action-values---the expected return from a given state and action---and Actor-Critic methods that learn a parameterized policy (see \citep{sutton2018reinforcement}).

We additionally assume that in the \emph{Data2Online} setting the agent has access to an offline log of data that it can use to initialize hyperparameters before learning online. This log consists of $\numdata$ tuples of experience $\dataset = \{(S_t, A_t, R_{t+1}, S_{t+1}, \gamma_{t+1})\}_{i=1}^\numdata$, generated by interaction in the environment by a previous controller or controllers. For example, an agent that will use Expected Sarsa might want to use this data to decide on a suitable stepsize $\alpha$, the number of layers $l$ in the neural network (NN) architecture for the action-values and even an initialization $\theta_0$ for the NN parameters---namely a policy initialization. There are several options for each hyperparameter combination, $\lambda = (\alpha, l, \theta_0)$,  resulting in a set of possible hyperparameters $\Lambda$ to consider. This set can be discrete or continuous, depending on the underlying ranges. For example, the agent might want to consider any $\alpha \in [0,1]$ and a $\theta_0$ only from a set of three possible choices. 

Procedurally, the Data2Online algorithm is given the dataset $\dataset$ and the set of hyperparameters $\Lambda$, and outputs a selected hyperparameter setting $\tilde{\lambda}$. A good choice is one that is within $\varepsilon$-optimal of the best hyperparameters
\begin{equation}
\text{Perf}(\tilde{\lambda}) \ge \max_{\lambda \in \Lambda} \text{Perf}(\lambda) - \varepsilon
\end{equation}
where $\text{Perf}(\lambda)$ is the online performance of the agent, when deployed with the given hyperparameters. Typically, this will be the cumulative reward for continuing problems and average return for episodic problems, for a fixed number of steps $T$. Many hyperparameters may allow the agent to perform well, so we focus on nearly-optimal performance under $\tilde{\lambda}$ rather than on identifying the best hyperparameters. 

The central question for this Data2Online problem is: how can the agent use this log of data to select hyperparameters before learning in deployment? This is no easy task. The agent cannot query $\text{Perf}(\lambda)$. It is not only evaluating a \emph{fixed} policy, for which we could use the large literature on Off-policy Policy Evaluation. It is evaluating a \emph{learning} policy. In the remainder of this paper, we introduce and test a new algorithm for this Data2Online problem. 


\section{Data2Online using Calibration Models}\label{sec_calibration}

This section introduces the idea of calibration models and how they can be used for hyperparameter selection. We first discuss how to use the calibration model to select hyperparameters, before providing a way to learn the calibration model. We then discuss certain criteria on the calibration model and agent algorithm that make this strategy more appropriate. We conclude with some theoretical characterization of the error in hyperparameter selection, based on inaccuracies in the calibration model.

\subsection{Using Calibration Models to Select Hyperparameters}\label{sec_calib_select}

A calibration model is a simulator of the environment---learned from an offline batch of data---used to specify (or calibrate) hyperparameters in the agent. With the calibration model, the agent can test different hyperparameter settings. It evaluates the online performance of different hyperparameter choices in the calibration model and selects the best one. It can then be deployed into the true environment without any remaining hyperparameters to tune.


The basic strategy is simple: we train a calibration model, then do grid search in the calibration model and pick the top hyperparameter, as summarized in Algorithm \ref{alg_selection}. For each hyperparameter, we obtain a measure of the online performance of the agent across $\numruns$ in the calibration model, assuming it gets to learn for $\numsteps$ of interaction. The pseudocode for AgentPerfInEnv is in Algorithm \ref{alg_perf}, for the episodic setting where we report average returns during learning. Note that we add a cutoff in the evaluation scheme to guarantee that at least 30 episodes are seen by the agent during training in the calibration model. We cut off episodes that run longer than $\numsteps/30$ steps, teleporting the agent back to a start state.

\begin{algorithm}[t]
	\caption{Hyperparameter Selection with Calibration Models using Grid Search}\label{alg_selection}
	\textbf{Input:} $\mathbf{\Lambda}$: hyperparameter set for learner $Agent$\\
	$\dataset$: the offline log data\\
	$\numsteps$: number of interactions or steps \\
	$\numruns$: number of runs 
	\begin{algorithmic}
		\STATE Train calibration model $\cmodel$ with $\dataset$
		\FOR {$\lambda$ in $\mathbf{\Lambda}$}
		\STATE $\text{Perf}[\lambda]$ = AgentPerfInEnv($\cmodel$, $Agent(\lambda)$, $\numsteps$, $\numruns$)
		\ENDFOR			
	\end{algorithmic}
	\textbf{Return:} $\arg\max_{\lambda \in \mathbf{\Lambda}} \text{Perf}[\lambda]$
\end{algorithm}
\begin{algorithm}[t]
	\caption{AgentPerfInEnv}\label{alg_perf}
	\textbf{Input:} $\cmodel$: calibration model, $Agent$: learner, 
	$\numsteps$: \# of steps, $\numruns$: \# of runs 
	\begin{algorithmic}
		\STATE $\numcutoff \gets \numsteps/30$ $\quad \triangleright$ \text{Ensure there are at least 30 episodes}	
		\STATE $ReturnsAcrossRuns = []$
		\FOR {$i = 1 \ldots \numruns$}
		\STATE $t \gets 0, G \gets 0, Returns = []$
		\STATE $s \gets $ random start state from $\cmodel$
		\FOR {$j = 1 \ldots \numsteps$}
		\STATE Obtain $a$ from $Agent(s)$, obtain $s', r = \cmodel(s,a)$, give $s', r$ to $Agent$
		\STATE $G \gets G + \gamma^{t} r$
		\STATE $t \gets t + 1$
		\IF{$s'$ is terminal or $t > \numcutoff$}
		\STATE Append $G$ to $Returns$
		\STATE $s \gets $ random start state from $\cmodel$, $t \gets 0, G \gets 0$
		\ENDIF
		\ENDFOR
		\STATE $ReturnsAcrossRuns[i] \gets average(Returns)$
		\ENDFOR
	\end{algorithmic}
	\textbf{Return:} $average(ReturnsAcrossRuns)$
\end{algorithm}

Many components in this approach are modular, and can be swapped with other choices. For example, instead of expected return during learning (online performance), optimizing the hyperparameters might be more desirable to find the best policy after a budget of steps. This would make sense if cumulative reward during learning in deployment was not important.
We might also want a more robust agent, and instead of expected return, we may want median return. Finally, the grid search can be replaced with a more efficient hyperparameter selection method; we discuss this further in Section \ref{sec_cem}. 

We can also make this hyperparameter search more robust to error in the calibration model by obtaining performance across an ensemble of calibration models. This involves using $\numensembles$ random subsets of the log data, say by dropping at random 10\% of samples, and training $\numensembles$ calibration models. The hyperparameter performance can either be averaged across these models, or a more risk-averse criterion could be used like worst-case performance. Using an average across models is like using a set of source environments to select hyperparameters---rather than a single source---and so could improve transfer to the real environment. 

These are all additions to this new Data2Online strategy. The central idea is to use this calibration model to evaluate hyperparameters, as if we had a simulator. We, therefore, investigate this idea first in its simplest form with expected returns, grid search and only one calibration model.

\subsection{When is this Approach Effective?}
\label{limitations}
This section highlights three conceptual criteria for designing the calibration model and selecting agents for which Data2Online should be effective. This includes 1) stability under model iteration, 2) {handling actions with low data coverage} and 3) selecting agent algorithms that only have initialization hyperparameters, namely those that affect early learning but diminish in importance over time. 

Producing reasonable transitions under many steps of model iteration is key for the calibration model. The calibration model is iterated for many steps, because the agent interacts with the calibration model as if it were the true environment---for an entire learning trajectory. It is key, therefore, that the calibration model be \emph{stable} and \emph{self-correcting}. A stable model is one where, starting from any initial state in a region, the model remains in that region. A self-correcting model is one that, even if it produces a few non-real states, it comes back to the space of real states. Otherwise, model iteration can produce increasingly non-sensical states, as has been repeatedly shown in model-based RL \citep{talvitie2017self,jafferjee2020hallucinating,abbas2020selective,chelu2020forethought}.  

The model also needs to handle actions with no coverage, or low coverage. For unvisited or unknown states, the model simply does not include such states. The actions, however, can be queried from each state. If an action has not been taken in a state, nor a similar state, the model cannot produce a reasonable transition. Any choice will have limitations, because inherently we are filling in this data gap with an inductive bias. A common choice in offline RL is to assume pessimistic transitions: if an action is not observed, it is assumed to have a bad outcome. This ensures the learned, fixed policy avoids these unknown areas. 

The choice is even more nuanced in Data2Online. Just like offline RL, it can be sensible to avoid these unknown actions, to answer: in the space known to the agent, what hyperparameters allow the agent to learn quickly? But, another plausible alternative is that we want to select hyperparameters to encourage the agent to explore unknown areas, since once deployed the agent can update its policy in these unknown areas. In other words, a principle of optimism could also be sensible. Selecting the right inductive bias will depend on the environment and the types of hyperparameters we are selecting. This question will likely be one of the largest questions in Data2Online, similarly to how it remains an open question in offline RL. 

The third criterion is a condition on the agent, rather than the model.
Practically, we can only test each hyperparameter setting for a limited number of steps in the calibration model. So, the calibration model is only simulating early learning. 
This suggests that this approach will be most effective if we tune \emph{initialization hyperparameters}: those that provide an initial value for a constant but wash away over time. Examples include an initial learning rate which is then adapted; policy initialization; and an initial architecture that is grown and pruned over time. 

These criteria are conceptual, based on reasoning through when we expect success or failure. We use these conceptual criteria to propose an appropriate approach to learn a calibration model in the next section. In addition to conceptual reasoning, theoretical understanding of the Data2Online problem setting is also critical. We provide a first step in that direction in the next section. 

\subsection{Theoretical Insights}

This problem has aspects that both make it harder and potentially easier than a standard offline RL problem. One aspect that makes this problem harder is that we have to evaluate a learning policy offline, rather than a fixed learned one. A fixed policy can be assessed using policy evaluation, and there exists a variety of theoretical results  on the quality of those estimates, many based on the foundational simulation lemma \citep{kearns2002nearoptimal}. No such results exist for evaluating a policy that will be changing online. 

At the same time, intuitively, the problem could be easier than the offline RL problem, because the policy adapts online. Instead, we only have to select from a potentially small number of hyperparameters, rather than from a potentially large policy space. For example, it may be relatively easy to identify the best stepsize out of a set of three stepsizes. Further, if a policy learning algorithm is chosen that is robust to its hyperparameter settings, then the problem may be even simpler. For example, it may be simple to select the initial stepsize for an adaptive stepsize algorithm, where the initial stepsize only influences the magnitude of updates for the first few steps. 

We first extend the foundational simulation lemma to the Data2Online setting, in Theorem 1. Then, in Theorem 2, we show how to use this result, to bound how far the value of the hyperparameters chosen in the learned calibration model are from the best hyperparameters. Finally, we discuss how it might be possible to formalize this second intuition, for future theoretical investigation. 

\newcommand{\norm}[1]{\| #1 \|}
\newcommand{\cS}{\mathcal{S}}
\newcommand{\cA}{\mathcal{A}}
\newcommand{\cH}{\mathcal{H}}
\newcommand{\RR}{{\mathbb{R}}}
\newcommand{\EE}{{\mathbb{E}}}
\newcommand{\rmax}{{r_{\text{max}}}}

We start by defining some needed terms.
An online learner can be characterized by a history dependent policy (see Chapter 38 of \citep{lattimore2020bandit}). A history dependent policy is $\pi=(\pi_0,\pi_1,\pi_2,\dots)$ where $\pi_t:\cH_t\to\Delta(\cA)$ and $\cH_t=(\cS\times\cA\times\RR)^{t}\times\cS$ is the history at time step $t$. For simplicity, we assume the rewards are deterministic in $[0,\rmax]$ and the MDP has one initial state $s_0$. The online learning agent interacts with the environment for $T$ steps in total, in either a continuing or fixed-horizon setting.

The value function for this online learner $\pi$ is the sum of rewards from time $t$ to the end of learning at $T-1$
\begin{align*}
V^\pi_{t}(h_t) &= \EE\left[\sum_{t'=t}^{T-1} r(S_{t'},A_{t'}) \mid H_t=h_t\right] 
\end{align*}
where the expectation is under $A_{t}\sim\pi_{t}(\cdot\mid H_t)$, $S_{t+1}\sim P(\cdot\mid S_t,A_t)$.
Note that $V^\pi_{0}(s_0)$ is the $T$ step objective that we use to select hyperparameters. For the fixed-horizon setting where episodes are of length \textit{horizon}, $T=K\cdot\textit{horizon}$ where $K$ is the number of episodes. In this setting, the expectation is under $S_{t+1}\sim P(\cdot\mid S_t,A_t)$ if $t$ is not the last step of an episode and $S_{t+1}=s_0$ if $t$ is the last step of an episode. Dividing $V^\pi_0(s_0)$ by $K$ gives the average episodic return over $K$ episodes. 

\begin{theorem}[Simulation Lemma for Online Learners] Assume the rewards $r(s,a)$ are deterministic in $[0,\rmax]$ and the MDP has one initial state $s_0$
	Suppose we have a learned model $(\hat P,\hat r)$ such that 
	\begin{equation*}
	\norm{\hat P(\cdot\mid s,a) - P(\cdot\mid s,a)}_1 \leq \varepsilon_p \quad \text{and} \quad |r(s,a)-\hat r(s,a)|\leq\varepsilon_r \quad\quad \text{for all $(s,a)\in\cS\times\cA$}
	\end{equation*} 
	and that $\hat r(s,a) \in [0, \rmax]$.
	Let $\hat V^\pi_{t}(\cdot)$ denote the value function under the learned model. Then for any history dependent policy $\pi$, we have that
	\begin{equation*}
	\lvert V^\pi_0(s_0) - \hat V^\pi_0(s_0)\rvert \leq \varepsilon_r T + \frac{\varepsilon_p \rmax T^2}{2}.
	\end{equation*}
\end{theorem}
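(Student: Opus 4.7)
I would prove this by backward induction on $t$, following the classical simulation lemma template adapted to the finite-horizon history-dependent setting. Because the rewards are deterministic in $(s,a)$, the rewards stored in the history are redundant, and the policy may equivalently be viewed as a function of state-action histories; under this viewpoint the Bellman recursion
$$V^\pi_t(h_t) = \sum_a \pi_t(a\mid h_t)\Bigl[r(s_t, a) + \sum_{s'} P(s' \mid s_t, a)\, V^\pi_{t+1}(h_t, a, s')\Bigr]$$
and its learned analog (with $\hat r$ and $\hat P$) hold cleanly at every step, with both value functions vanishing at $t=T$.

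The crux of the argument is an add-and-subtract decomposition of $V^\pi_t(h_t) - \hat V^\pi_t(h_t)$ into three pieces averaged under $\pi_t(\cdot\mid h_t)$: a reward error $r(s_t,a) - \hat r(s_t,a)$ which is bounded pointwise in absolute value by $\varepsilon_r$; a transition error $\sum_{s'}(P - \hat P)(s'\mid s_t, a)\,\hat V^\pi_{t+1}(h_t,a,s')$ which by H\"older's inequality is at most $\|P(\cdot\mid s,a) - \hat P(\cdot\mid s,a)\|_1 \cdot \|\hat V^\pi_{t+1}\|_\infty \leq \varepsilon_p(T-t-1)\rmax$, where the value-function bound uses the hypothesis $\hat r(s,a)\in[0,\rmax]$; and a propagated error $\sum_{s'} P(s'\mid s_t,a)\,(V^\pi_{t+1} - \hat V^\pi_{t+1})(h_t,a,s')$ which is dominated by the recursive sup-norm quantity $\delta_{t+1} := \sup_{h_{t+1}} |V^\pi_{t+1}(h_{t+1}) - \hat V^\pi_{t+1}(h_{t+1})|$.

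Taking the supremum over $h_t$ yields the linear recursion $\delta_t \leq \varepsilon_r + \varepsilon_p \rmax (T-t-1) + \delta_{t+1}$ with base case $\delta_T = 0$, which unrolls to $\delta_0 \leq \sum_{t=0}^{T-1}[\varepsilon_r + \varepsilon_p \rmax (T-t-1)] = \varepsilon_r T + \varepsilon_p \rmax T(T-1)/2$; the stated bound then follows from $|V^\pi_0(s_0) - \hat V^\pi_0(s_0)| \leq \delta_0$ together with $T(T-1)/2 \leq T^2/2$.

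I expect the main obstacle to be the treatment of history-dependence of $\pi$: under the literal definition in which rewards appear inside the history, the next-step histories fed to $V^\pi_{t+1}$ under $P$ and to $\hat V^\pi_{t+1}$ under $\hat P$ disagree, since one contains $r(s_t,a)$ while the other contains $\hat r(s_t,a)$, so the naive induction would incur an extra term capturing the sensitivity of $\hat V^\pi_{t+1}$ to reward perturbations in its history argument. The cleanest workaround is the state-action-history reformulation noted above, which is valid precisely because deterministic rewards carry no additional information over the state-action prefix; with this observation in hand the policy distributions $\pi_t(\cdot\mid h_t)$ factor out cleanly on both sides of the decomposition, so the recursion and its summation become routine.
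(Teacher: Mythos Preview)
Your proposal is correct and follows essentially the same approach as the paper: both invoke the deterministic-reward observation to drop rewards from the history, set up the same Bellman recursion, and perform a backward-induction add-and-subtract that yields the recursion $\delta_t \leq \varepsilon_r + \varepsilon_p \rmax (T-t-1) + \delta_{t+1}$, which is then summed. The only cosmetic difference is the order of the telescoping---the paper writes the transition error against $V^\pi_{t+1}$ and propagates under $\hat P$, whereas you write it against $\hat V^\pi_{t+1}$ and propagate under $P$---but both variants give the identical bound.
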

\begin{proof}
	We follow the proof of the simulation lemma. Since the rewards are deterministic, the history does not need to contain reward, that is, $\cH_t=(\cS\times\cA)^t\times\cS$.
	For any $h_t = (s_0,a_t,\dots,s_t)\in\cH_t$ with $t<T-1$,
	\begin{align*}
		V^\pi_t(h_t) &= \sum_{a\in\cA} \pi_t(a \mid h_t) r(s_t,a) + \sum_{a\in\cA} \pi_t(a \mid h_t) \sum_{s'\in\cS} P(s,a,s') V^\pi_{t+1}((h_t,a,s')) \\
		\hat V^\pi_t(h_t) &= \sum_{a\in\cA} \pi_t(a \mid h_t) \hat r(s_t,a) + \sum_{a\in\cA} \pi_t(a \mid h_t) \sum_{s'\in\cS} \hat P(s,a,s') \hat V^\pi_{t+1}((h_t,a,s'))
	\end{align*}
	and for the last step we have 
	\begin{equation*}
		V^\pi_{T-1}(h_{T-1}) = \sum_{a\in\cA} \pi_{T-1}(a \mid h_{T-1}) r(s_{T-1}, a) \quad \text{and}\quad
		\hat V^\pi_{T-1}(h_{T-1}) = \sum_{a\in\cA} \pi_{T-1}(a \mid h_{T-1}) \hat r(s_{T-1},a).
	\end{equation*}
	%
	We prove the simulation lemma from the last step. For $t=T-1$,
	\begin{align*}
		\left\lvert V^\pi_{T-1}(h_{T-1}) - \hat V^\pi_{T-1}(h_{T-1}) \right\rvert 
		= \left\lvert \sum_{a\in\cA} \pi_{T-1}(a \mid h_{T-1}) r(s_{T-1},a) -  \sum_{a\in\cA} \pi_{T-1}(a \mid h_{T-1}) \hat r(s_{T-1},a) \right\rvert \leq \varepsilon_r.
	\end{align*}
	%
	%
	For all $t<T-1$,
	\begin{align*}
		&\lvert V^\pi_t(h_t) - \hat V^\pi_t(h_t) \rvert  \\
		&= \lvert \sum_{a\in\cA} \pi_t(a \mid h_t) r(s_t,a) + \sum_{a\in\cA} \pi_t(a \mid h_t) \sum_{s'\in\cS} P(s,a,s') V^\pi_{t+1}((h_t,a,s')) \\
		&\quad-\sum_{a\in\cA} \pi_t(a \mid h_t) \hat r(s_t,a) - \sum_{a\in\cA} \pi_t(a \mid h_t) \sum_{s'\in\cS} \hat P(s,a,s') \hat V^\pi_{t+1}((h_t,a,s')) \rvert \\
		&\leq \varepsilon_r + \sum_{a\in\cA} \pi_t(a \mid h_t) \lvert \sum_{s'\in\cS} P(s,a,s') V^\pi_{t+1}((h_t,a,s')) - \sum_{s'\in\cS} \hat P(s,a,s') \hat V^\pi_{t+1}((h_t,a,s')) \rvert \\
		&= \varepsilon_r + \sum_{a\in\cA} \pi_t(a \mid h_t)  \lvert \sum_{s'\in\cS} P(s,a,s') V^\pi_{t+1}((h_t,a,s')) - \sum_{s'\in\cS} \hat P(s,a,s') V^\pi_{t+1}((h_t,a,s')) \\
		&\quad + \sum_{s'\in\cS} \hat P(s,a,s') V^\pi_{t+1}((h_t,a,s')) - \sum_{s'\in\cS} \hat P(s,a,s') \hat V^\pi_{t+1}((h_t,a,s')) \rvert \\
		&\leq \varepsilon_r + \sum_{a\in\cA} \pi_t(a \mid h_t) \lvert \sum_{s'\in\cS} (P(s,a,s') - \hat P(s,a,s')) \underbrace{V^\pi_{t+1}((h_t,a,s'))}_{\leq (T -t - 1)\rmax}\rvert \\
		&\quad + \sum_{a\in\cA} \pi_t(a \mid h_t) \lvert \sum_{s'\in\cS} \hat P(s,a,s') (V^\pi_{t+1}((h_t,a,s')) - \hat V^\pi_{t+1}((h_t,a,s')))\rvert \\
		&\leq \varepsilon_r + \sum_{a\in\cA} \pi_t(a \mid h_t) (T - t - 1) \rmax \sum_{s'\in\cS} \mid P(s,a,s') - \hat P(s,a,s')\mid + \max_{a,s'} \lvert V^\pi_{t+1}((h_t,a,s')) - \hat V^\pi_{t+1}((h_t,a,s'))\rvert \\
		&\leq \varepsilon_r + \varepsilon_p \rmax(T -t - 1) + \max_{a,s'} \vert V^\pi_{t+1}((h_t,a,s')) - \hat V^\pi_{t+1}((h_t,a,s')) \rvert
	\end{align*}
	Therefore, $\lvert V^\pi_{0}(s_0) - \hat V^\pi_{0}(s_0) \rvert \leq \varepsilon_r + \varepsilon_p \rmax (T - 1) + \dots + \varepsilon_r + \varepsilon_p \rmax 1 + \varepsilon_r \leq \varepsilon_r T + \frac{\varepsilon_p \rmax T^2}{2}$.
\end{proof}

Theorem 1 says that if we have a good model of the environment, we can evaluate the $T$ step objective for any online learner with bounded error. In particular, we can control this evaluation error by controlling the error of the learned model. Note that $v_{\text{max}} \doteq T \rmax$ is the maximum value, so the last term $\frac{\varepsilon_p \rmax T^2}{2}$ can be interpreted as $\tfrac{\varepsilon_p v_{\text{max}} T}{2}$, meaning the bound scales with $T$: $(\varepsilon_r + \tfrac{\varepsilon_p v_{\text{max}}}{2}) T$.


Back to our problem setting. Let $\Lambda$ be the set of hyperparameters and $\pi_{\lambda}$ be a learner's policy with $\lambda\in\Lambda$. In our algorithm, we choose the best hyperparameters based on $\tilde V^\pi_0(s_0)$, which is an estimator for $\hat V^\pi_0(s_0)$ by running $n$ runs with $\hat P$. Let $\tilde \lambda = \arg\max_{\lambda\in\Lambda} \tilde V^{\pi_\lambda}_0(s_0)$ be the hyperparameters returned by our algorithm and $\lambda^* = \arg\max_{\lambda\in\Lambda} V^{\pi_{\lambda}}_0(s_0)$ be the true best hyperparameters in the set. The following theorem shows that our hyperparameters will not be too far from the best hyperparameters in terms of the $T$ step objective. 
\begin{theorem}
	Under the same conditions as Theorem 1, with probability $1-\delta$, we have
	\begin{align*}
		V^{\pi_{\lambda^*}}_0(s_0) - V^{\pi_{\tilde \lambda}}_0(s_0) &\leq 2\varepsilon_r T + \varepsilon_p \rmax T^2 + \rmax T \sqrt{\frac{2\ln{(4/\delta)}}{n}}\\
		&= \underbrace{(2\varepsilon_r + \varepsilon_p v_{\text{max}}) T}_{\text{approximation error}} + \underbrace{v_{\text{max}} \sqrt{\frac{2\ln{(4/\delta)}}{n}}}_{\text{estimation error}}.  
	\end{align*}
\end{theorem}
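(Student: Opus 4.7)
The plan is to decompose the hyperparameter suboptimality gap $V^{\pi_{\lambda^*}}_0(s_0) - V^{\pi_{\tilde \lambda}}_0(s_0)$ into five pieces: two model-approximation pieces bounded by Theorem~1, two Monte Carlo estimation pieces bounded by Hoeffding's inequality, and one non-positive piece coming from the fact that $\tilde\lambda$ is the $\arg\max$ of $\tilde V$. Concretely, I would insert both the learned-model value $\hat V^\pi_0$ and the $n$-run empirical estimator $\tilde V^\pi_0$ between the two true values, writing
\begin{align*}
V^{\pi_{\lambda^*}}_0(s_0) - V^{\pi_{\tilde \lambda}}_0(s_0)
&= \bigl[V^{\pi_{\lambda^*}}_0 - \hat V^{\pi_{\lambda^*}}_0\bigr] + \bigl[\hat V^{\pi_{\lambda^*}}_0 - \tilde V^{\pi_{\lambda^*}}_0\bigr] \\
&\quad + \underbrace{\bigl[\tilde V^{\pi_{\lambda^*}}_0 - \tilde V^{\pi_{\tilde \lambda}}_0\bigr]}_{\le 0}
+ \bigl[\tilde V^{\pi_{\tilde \lambda}}_0 - \hat V^{\pi_{\tilde \lambda}}_0\bigr] + \bigl[\hat V^{\pi_{\tilde \lambda}}_0 - V^{\pi_{\tilde \lambda}}_0\bigr],
\end{align*}
where the middle bracket is non-positive because $\tilde\lambda$ maximizes $\tilde V^{\pi_\lambda}_0(s_0)$ over $\Lambda$.

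For the first and last brackets I would apply Theorem~1 to the two history-dependent learner policies $\pi_{\lambda^*}$ and $\pi_{\tilde\lambda}$. Each contributes at most $\varepsilon_r T + \tfrac{\varepsilon_p \rmax T^2}{2}$, so together they account for the approximation-error term $2\varepsilon_r T + \varepsilon_p \rmax T^2 = (2\varepsilon_r + \varepsilon_p v_{\max})T$ in the stated bound. Note that the simulation lemma holds for \emph{any} history-dependent policy, so no adaptivity issue arises at this step.

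For the second and fourth brackets, I would use the fact that $\tilde V^{\pi_\lambda}_0(s_0)$ is the empirical mean of $n$ i.i.d.\ rollout returns under $\hat P$, each lying in $[0, v_{\max}]$ with $v_{\max}=\rmax T$ and expectation $\hat V^{\pi_\lambda}_0(s_0)$. Hoeffding's inequality then yields $\Pr\bigl(|\tilde V^{\pi_\lambda}_0 - \hat V^{\pi_\lambda}_0| > t\bigr)\le 2\exp(-2nt^2/v_{\max}^2)$. Setting each tail probability to $\delta/4$ and taking a union bound, I get that with probability at least $1-\delta$ both estimation errors are at most $v_{\max}\sqrt{\ln(4/\delta)/(2n)}$; summing the two identical terms gives $v_{\max}\sqrt{2\ln(4/\delta)/n}=\rmax T\sqrt{2\ln(4/\delta)/n}$, which matches the estimation-error term in the claim.

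The one step I expect to require extra care is the fourth bracket, because $\tilde\lambda$ is itself a data-dependent random variable: a pointwise Hoeffding bound cannot be invoked at a random $\lambda$ without some form of uniform control. The standard fix is to take a union bound over all $\lambda\in\Lambda$ before selecting $\tilde\lambda$, which formally introduces a $\log|\Lambda|$ factor inside the square root; the stated theorem absorbs this into the constant $\ln(4/\delta)$ (equivalently, treats $|\Lambda|$ as a small constant). Once this detail is handled, the rest of the argument is a mechanical combination of the already-established Theorem~1 and a two-sided Hoeffding bound with the standard union-bound factor of $2$ inside the logarithm.
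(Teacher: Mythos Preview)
Your proposal is correct and follows essentially the same decomposition and tools as the paper's proof: telescope through $\hat V$ and $\tilde V$, use the $\arg\max$ property of $\tilde\lambda$ to drop the middle piece, bound the two model-approximation pieces by Theorem~1, and bound the two estimation pieces by Hoeffding with a union bound. Regarding your last paragraph, the paper also applies Hoeffding pointwise and then union-bounds only over the two policies $\pi_{\lambda^*}$ and $\pi_{\tilde\lambda}$ (not over all of $\Lambda$), so the data-dependence subtlety you flag is glossed over in the paper's own argument rather than being something your proof is missing relative to theirs.
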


\begin{proof}	
	By Hoeffding's inequality, for a given $\pi$, we have with probability $1-\delta/2$ that
\begin{equation*}
\lvert \hat V^\pi_0(s_0) - \tilde V^\pi_0(s_0)\rvert \leq T\rmax\sqrt{\frac{\ln{(4/\delta)}}{2n}}
\end{equation*}	
because the return in each run, to give the sample average $\tilde V^\pi_0(s_0)$, is in $[0,T \rmax]$. 
Using the union bound, we can say this inequality holds for both $\pi_{\lambda^*}$ and $\pi_{\tilde \lambda}$, with probability $1-\delta$. The source of this difference is from using a limited number of runs to approximate $\hat V^\pi_0(s_0)$. As we increase the number of runs $n$, then the difference between our estimator $\tilde V^\pi_0(s_0)$ and $\hat V^\pi_0(s_0)$ goes to zero.

	Now we can reason about the hyperparameters chosen using $\tilde \lambda = \arg\max_{\lambda\in\Lambda} \tilde V^{\pi_\lambda}_0(s_0)$.
	\begin{align*}
		V^{\pi_{\lambda^*}}_0(s_0) - V^{\pi_{\tilde \lambda}}_0(s_0)
		&= V^{\pi_{\lambda^*}}_0(s_0) - \tilde V^{\pi_{\tilde \lambda}}_0(s_0) + \tilde V^{\pi_{\tilde \lambda}}_0(s_0) - V^{\pi_{\tilde \lambda}}_0(s_0) \\
		&\leq V^{\pi_{\lambda^*}}_0(s_0) - \tilde V^{\pi_{\lambda^*}}_0(s_0) + \tilde V^{\pi_{\tilde \lambda}}_0(s_0) - V^{\pi_{\tilde \lambda}}_0(s_0) \\
		&= V^{\pi_{\lambda^*}}_0(s_0) - \tilde V^{\pi_{\lambda^*}}_0(s_0) + \hat V^{\pi_{\lambda^*}}_0(s_0) - \hat V^{\pi_{\lambda^*}}_0(s_0) \\
		& \ \ \ \  + \tilde V^{\pi_{\tilde \lambda}}_0(s_0) - V^{\pi_{\tilde \lambda}}_0(s_0) + \hat V^{\pi_{\tilde \lambda}}_0(s_0) - \hat V^{\pi_{\tilde \lambda}}_0(s_0) \\
		&\leq \lvert V^{\pi_{\lambda^*}}_0(s_0) - \hat V^{\pi_{\lambda^*}}_0(s_0)\rvert + \lvert\hat V^{\pi_{\tilde \lambda}}_0(s_0) - V^{\pi_{\tilde \lambda}}_0(s_0)\rvert \\
		& \ \ \ \ + \lvert \hat V^{\pi_{\lambda^*}}_0(s_0) - \tilde V^{\pi_{\lambda^*}}_0(s_0)\rvert + \lvert \tilde V^{\pi_{\tilde \lambda}}_0(s_0) - \hat V^{\pi_{\tilde \lambda}}_0(s_0) \rvert \\
		&\leq 2\max_{\lambda\in\Lambda} \lvert V^{\pi_{\lambda}}_0(s_0) - \hat V^{\pi_{\lambda}}_0(s_0)\rvert + 2T\rmax\sqrt{\frac{\ln{(4/\delta)}}{2n}} \\
		&\leq 2\varepsilon_r T + \varepsilon_p \rmax T^2 + \rmax T\sqrt{\frac{2\ln{(4/\delta)}}{n}}.
	\end{align*}	
	The last inequality follows from Theorem 1. 
\end{proof}

This result is a sanity check that we can reason about error in identifying hyperparameters based on model error. However, it has several limitations. One limitation is that the result is for continuing problems and fixed-horizon episodic problems, but not variable length episodic problems. The analysis does not address variable length episodic problems, because it would impact both the histories on which policies are conditioned as well as the definition of the value function for this learning policy. The more important limitation, however, is that the bound depends on the length of interaction $T$, and worse on the squared length of interaction $T^2$ if we assume $v_{\text{max}} \doteq T \rmax$. Even if episodes are short, we expect the agent to learn for thousands of steps and so $T$ can be quite large. It may be difficult to obtain sufficiently low $\varepsilon$ (model error) to control for this accumulating error over many steps.

Ideally, we should be able to obtain a better result by considering smoothness in performance with respect to hyperparameters. Empirical studies do suggest performance changes smoothly as a function of hyperparameters, whenever hyperparameter sensitivity plots are shown. We hypothesize that there exists a subset of hyperparameters such that $V^{\pi_{\lambda}}_0(s_0)$  is smooth w.r.t. the hyperparameters in the subset, and hyperparameters outside the subset have very low $V^{\pi_{\lambda}}_0(s_0)$. Therefore, the error bound from Theorem 2 just needs to be smaller than the performance gap between hyperparameters in the good subset and hyperparameters outside the good subset to guarantee finding a nearly optimal hyperparameter setting. This direction is an important next step.

\section{Stable Calibration Models with KNNs}\label{sec_knn}

We develop a non-parametric k-nearest neighbor (KNN) calibration model that (a) ensures the agent always produces real states and (b) remains in the space of states observed in the data, and so is stable under many iterations. The idea is simple: the entire offline data log constitutes the model, with trajectories obtained by chaining together transitions. Figure \ref{fig:calibration_ep} shows the interaction between the calibration model and the agent in each episode.
There are, however, several nuances in using this strategy to obtain calibration models. In particular, the method relies heavily on the distance metric used to find nearest neighbors. Further, the dataset is limited, and may have poor coverage for actions in certain states. We start by introducing the basic approach, and then discuss these two nuances in the following two subsections.  

\begin{figure*}[t]
	\begin{centering}
		\vspace{-1cm}
		\includegraphics[width=0.8\textwidth]{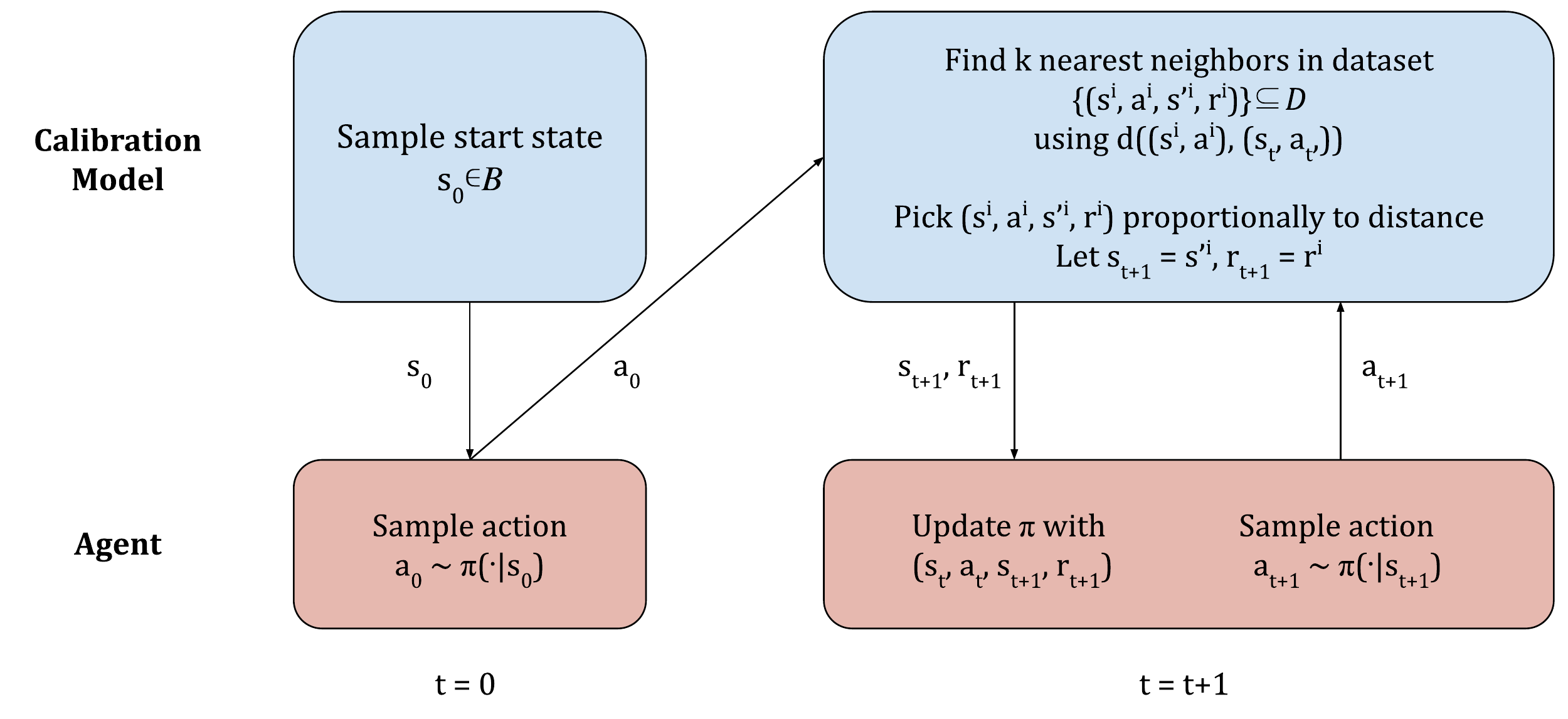}
		\caption{The interaction between the calibration model and the agent in each episode. The process happening in the calibration model is on the first row, the agent's process is on the second row.
		}
		\label{fig:calibration_ep}
	\end{centering}
\end{figure*}


\subsection{The KNN Calibration Model}


The calibration model needs to produce a (stochastic) outcome next state and reward $r, s'$, given a state and action $s,a$. We can produce novel trajectories from a dataset, by noting that if a state-action pair $(s_t,a_t)$ is similar to $(s_i,a_i)$ for a stored tuple $(s_i, a_i, r_i, s'_i)$, then it is plausible that $r_i, s'_i$ could also have been observed from $(s_t,a_t)$. To allow for stochastic transitions, the $k$ most similar pairs to $(s_t,a_t)$ can be found, and the next state and reward selected amongst these by sampling proportionally to similarity. 

More formally, given the current state-action pair $(s_t, a_t)$, the model searches through all tuples $(s, a, s', r)$ and selects the $k$ nearest neighbors, according to similarity between $(s_t,a_t)$ and $(s,a)$. (We will discuss how to compute similarity in Section \ref{sec_similarity}.) Let $\{(s_i, a_i, r_i, s'_i)\}_{i=1}^k$ correspond to these tuples, and $d_i$ to the distance between $(s_t, a_t)$ and $(s_i, a_i)$. 
Then these $(r_i, s'_i)$ are all possible outcome rewards and next states, where the likelihood corresponds to similarity to $(s_i, a_i)$. If $(s_t,a_t)$ is very similar to $(s_i, a_i)$, then $(r_i, s'_i)$ is a likely outcome. Otherwise, the more dissimilar, the more unlikely it is that $(r_i, s'_i)$ is a plausible outcome. The tuple $i$ is sampled proportionally to the probability $p_i = 1- \frac{d_i}{\Sigma_{j \in [1,k]} d_j}$, where a smaller distance indicates higher similarity. 

This procedure is summarized in Figure \ref{fig:calibration_ep}. At the start of each episode, a start state $s_0$ is sampled randomly from the set of start states in the dataset. The agent takes its first action $a_0$, to get the first pair $(s_0, a_0)$ and the $k$ nearest neighbors are found. This process continues until the agent reaches a terminal state, or the episode is cutoff and the agent teleported back to a start state.
An overview of learning the KNN calibration model is given in Algorithm \ref{alg_knn_calib} and sampling the model in Algorithm \ref{alg_knn_sample}. 

There are several details worth mentioning in the algorithms. First, a KNN model relies heavily on an appropriate distance. For example, for input states that correspond to $(x,y)$ position, Euclidean distance can be a poor measure of similarity. If there is a wall in the environment, two states might be similar in Euclidean distance, but far in terms of reachability with notably different dynamics. We ameliorate this by learning a new representation---called a Laplace representation---$\psi(s)$ and using Euclidean distance in this new space that better reflects similarity in transition dynamics, as described in Section \ref{sec_similarity}. 

Second, there may be no similar pairs in the data for a given $(s,a)$. The state is one that is observed in the dataset, but the action may not be since it is selected by the agent running in the calibration model. When the next outcome state $s_{t+1}$ is chosen from $s_t$, the agent selects action $\tilde{a}_{t+1}$. The dataset might contain multiple transitions from states like $s_{t+1}$---including of course the transition that includes $s_{t+1}$---but these may be for only a subset of the actions. If none of these transitions uses $\tilde{a}_{t+1}$, then the dataset has insufficient coverage to infer what might occur when taking that action in the environment. When this occurs in Algorithm \ref{alg_knn_sample}---when the closest point (minimum distance) is too far away (above a threshold)---we set the return to a default return and terminate the episode. We discuss an appropriate choice for this default return in Section \ref{sec_datacoverage}.   

Finally, we want to ensure that the model is efficient to query, even if we have a large dataset. For the discrete action setting, it is possible to get an $O(1)$ look-up by caching the nearest neighbors upfront. For $n$ datapoints, for each action we construct a table with $n$ rows and $k$ columns for the nearest neighbors, where each neighbor is stored as its index from 1 to $n$. Each transition consists of jumping between rows, using these indices. The detailed pseudocode is provided in Appendix \ref{apdx:calibration-detail}. More generally, for continuous actions, we can use a k-d tree \citep{bentley1975multidimensional} to search for the k-nearest neighors. The k-d tree takes the transformed state-action pair, $(\psi(s),a)$ as the key for the search. For a dataset of size $n$, it costs $O(n\log n)$ to construct the k-d tree and $O(\log n)$ to query for a nearest neighbor. This low computational complexity is key to allow us to use all of the data to create our calibration model. 

\begin{algorithm}[t]
	\caption{Learn KNN Calibration Model}\label{alg_knn_calib}
	\textbf{Input:} dataset $\dataset$ with tuples of the form $(S, A, S', R)$, number of nearest neighbors $k$ (default $k = 3$) \\
	\textbf{Constructs:} Representation $\psi$ for distances, KD-Tree $Trees$ for fast nearest neighbors search, $R_{\text{default}}$ default return
	\begin{algorithmic}
		\STATE $\psi \leftarrow $LaplaceRepTraining$(\dataset)$ in Algorithm \ref{reptraining}
		\STATE $Trees \leftarrow $KDTreeConstruction$(\psi, \dataset)$ in Algorithm \ref{treeconstr}
		\STATE Extract starting states $\mathcal{B} \subseteq \mathcal{D}$
		\STATE Set $R_{\text{default}}$ to minimum return in dataset (pessimistic default return)
	\end{algorithmic}
\end{algorithm}

\begin{algorithm}[t]
	\caption{Sample KNN Calibration Model}\label{alg_knn_sample}
	\textbf{Input:} State $s_t$, Action $a$; if no action is given, procedure returns a start state
	\begin{algorithmic}
		\IF {No action is given}
			\RETURN Sample $s \in \mathcal{B}$ uniform randomly
		\ENDIF
		\STATE // Find $k$ nearest neighbors to $s_t,a_t$, to get potential next states and rewards
		\STATE ${ (s'_i, r_i, d_i)}_{i=1}^k \leftarrow$ KDTreeSearch$(\psi(s_t), a, Trees, k)$ in Algorithm \ref{treesearch} 
		\STATE // If closest neighbor is far, then return a default return and terminate
		\IF{$\min_i d >$ threshold}
		\STATE $r \leftarrow R_{\text{default}}$,  $s' \leftarrow \text{terminal}$
		\RETURN $(r, s')$
		\ENDIF
		\STATE Sample $ i \in [1,k]$ according to probabilities $p_i = 1- \frac{d_i}{\Sigma_{j \in [1,k]} d_j}$
		\RETURN $(r_i, s'_i)$
	\end{algorithmic}
\end{algorithm}

We can contrast this KNN calibration model to two natural alternatives: a kernel density estimator (KDE) model and a neural network model. A KDE model is a non-parametric estimator, that has even been investigated for model-based RL \citep{pan2017rem}. Like our KNN calibration model, it should also stably remain within the region defined by the dataset. However, unlike the KNN calibration model, a KDE calibration model could produce non-existent states. It effectively interpolates between observed datapoints, and so results in significant generalization. If we consider again the example with $(x,y)$ position in a gridworld with walls, then the KDE calibration model could produce transitions within the wall. 

Another alternative is to use a neural network (NN) to learn a calibration model. The dataset can be used to learn the expected next state and reward, for given a state and action, using regression on inputs $(s,a)$ and targets $(r, s')$. Or, to obtain a distribution over next states and rewards, a conditional distribution can be learned using mixture density networks or stochastic networks. Such NN models can produce non-existent states, just like the KDE model. Worse, however, is that iteration under the NN model may not be stable. Several works in model-based RL have illustrated that iterating such models can produce less and less plausible outcomes states \citep{talvitie2017self,jafferjee2020hallucinating,abbas2020selective,chelu2020forethought}. 

Avoiding such issues is an active area of research, such as by training models to be correct over multiple steps of iteration \citep{talvitie2014model,venkatraman2015improving,talvitie2017self,williams2017information,ke2019learning}. For our Data2Online setting, this is difficult to use, because the number of steps of iteration is much larger than what is typically used in model-based RL. A model in model-based RL may be rolled out for 100 steps, whereas here it needs to be rolled out for the entire length of training. Though there is some work investigating learning neural networks that are guaranteed to be stable under iteration \citep{manek2019learning}, the approach requires a specialized architecture. We are hopeful that, with more research, NN models will become a viable choice for learning calibration models. We include NNs in our experiments, as a baseline. 


\subsection{Improving the Distance Metric for the KNN}\label{sec_similarity}

It is not hard to imagine examples where Euclidean distance on states or observations does not appropriately reflect similarity of states. For example, in a maze environment, if inputs correspond to $(x,y)$, two nearby points in Euclidean distance may actually be on opposite sides of a wall, thus far apart in terms of transition dynamics. Similarly, Euclidean distance does not apply to images, since pixel-wise difference can make every image look very different from all the others in the dataset. 

Instead, we exploit a standard approach in metric learning: we first map the inputs to a new space where Euclidean ($\ell_2$) distance is meaningful. In particular, we would like a new representation $\psi(s)$ where states $s_i$ and $s_j$ that have similar outcomes in terms of states and rewards are mapped to similar vectors $\psi(s_i) \approx \psi(s_j)$, and ones with dissimilar outcomes are mapped to different representations. 

Such representations that reflect similarity in terms of transition dynamics have been explored under what are called \emph{Laplace representations} \citep{wu2018laplacian}. The approach relies on having a stored trajectory that maintains the order of interaction. The objective includes two components: an \emph{attractive term} that encourages two states that are nearby in the trajectory to have similar representations, and a \emph{repulsive term} that encourages randomly sampled states to have different representations. For a neural network $\psi_\theta$ with parameters $\theta$, the last layer of the NN $\psi_\theta(s)$ has loss
{\small
	\begin{align*}
	\sum_{s_t \sim \dataset} \| \psi_\theta(s_t) - \psi_\theta(s_{t+1}) \|_2^2  
	+ \sum_{s_i, s_j \sim \dataset} \left((\psi_\theta(s_i)^T \psi_\theta(s_{j}))^2 - \|\psi_\theta(s_i)\|_2^2 - \|\psi_\theta(s_j)\|_2^2 \right)
	\end{align*}}
The inclusion of representation norms $-\|\psi_\theta(s_i)\|_2^2$ ensures that the representation is not simply decreased to zero to satisfy the first attractive term. Minimizing this objective encourages $\| \psi_\theta(s_t) - \psi_\theta(s_{t+1}) \|_2^2$ to be small for states right beside each other in the trajectory---temporally close. The second term $(\psi_\theta(s_i)^T \psi_\theta(s_{j}))^2$ is the repulsive term that encourage random pairs to have orthogonal representations. It is possible for $s_t, s_{t+1}$ to be randomly selected for the second term, but this is not that likely under the possible $n^2$ pairs; the first term dominates, ensuring these nearby points have similar representations.
More details on learning the Laplace representation are given in Appendix \ref{app_laplace}.

The distance for a state-action pair is defined differently for discrete and continuous actions. 
For discrete actions, two actions are considered similar only when they are exactly the same. The resulting distance is
\begin{equation*}
d((s_i,a_i), (s_j, a_j)) = \begin{cases} d_s(s_i, s_j) & \text{if } a_i = a_j \\ \infty  & \text{else}\end{cases} \quad \text{ for } d_s(s_i, s_j) \doteq \| \psi(s_i) - \psi(s_j)\|_2^2
\end{equation*}
In practice, we simply keep separate data structures for each action, to find nearest neighbors.
For continuous action problems, the Laplace representation can actually be learned on $(s,a)$ directly, to obtain $\psi(s,a)$.

\subsection{Insufficient Data Coverage}\label{sec_datacoverage}
We do not require the dataset to have perfect state and action space coverage. 
We only query the KNN calibration model from states $s$ that are in the dataset, by construction. But, for a given action $a$, there may be no state-action pair that is similar to $(s,a)$ and so there is insufficient information about the outcome for that pair. What then should the model return?

A natural choice is to truncate the episode, provide a default return---as if the agent had managed to visit future states in the environment---and transition back to the start state. This synthetic interaction in the calibration model is inaccurate, so we encourage the agent to learn within the parts of the calibration model that meaningfully reflect the true environment and avoid these unknown areas. This suggests using a \emph{pessimistic} default return. The default return can be set to the minimal return experienced in the dataset. When the agent reaches these unknown state-action pairs, it receives a low return and on the next episode is less likely to come back to this unknown state-action pair. 

Pessimism has also been used in offline RL, but for a subtly different purpose than here. The goal of pessimism in offline RL is to avoid unknown areas, as it is unlikely for the fixed policy to be good in a region that it has never observed and further that unknown region may be dangerous. It is much safer to stay within the data distribution, and simply improve performance within that region. 

For us, the policy can adapt online if it reaches unknown areas, so it is not necessary to ensure the agent avoids them in the environment. But, we avoid encouraging the agent to visit these unknown areas in the calibration model because they are not reflective of the true environment, potentially skewing hyperparameter selection. For example, if the agent was instead encouraged to visit these state-action pairs (using optimism), then it might find an unknown state-action pair and spend most of its time visiting it. The hyperparameters would be tuned to work well for this interaction, with short episodes and (default) Monte-carlo return from this state-action that do not require any bootstrapping. Our primary purpose with this choice, therefore, is to make interaction in the calibration model more similar to interaction in the environment, under the unavoidable limitations of insufficient data coverage.

\section{Experiments}
\label{exp_sec}
We conducted a battery of experiments to provide a rounded assessment of when an approach can or cannot be expected to reliably select good hyperparameters for online learning.
We investigate varying the data collection policy and size of the data logs to mimic a variety of deployment scenarios ranging from a near-optimal operator to random data. We explore selecting hyperparameters of different types for several different agents, and investigate a non-stationary setting where the environment changes from data collection to deployment. 
We begin with the simplest first question: how does our approach compare to simple baselines and with different choices of calibration model type. 

To extensively test the reliability of our approach, we deploy the algorithm on variants of Acrobot, Puddle World, and Cartpole \citep{sutton2018reinforcement}. All three environments are episodic and have a low-dimensional continuous state and discrete actions. Small environments allow extensive experimentation; critical for hyperparameter analysis and achieving statistically significant results. In addition, recent studies have shown that conclusions from small classic control environments match those generated in larger scale environments like Atari \citep{obando2020revisiting}. Experiments were conducted on a cluster and a powerful workstation using $\sim8327$ CPU hours and no GPUs. Full lists of all the hyperparameters can be found in the appendix. 

\subsection{Experiment 1: Comparing Calibration Models}
In this experiment we investigate the benefits of our approach with different choices of model in two classic control environments. We compare our KNN calibration model with learned Laplace similarity metric to an NN model trained to predict the next state and reward given input state and action observed in the calibration data. In addition, we also test an NN calibration model that takes the {\em Laplacian encoding} (see Section \ref{sec_knn}) of the current state as input and predicts the next state and reward to provide the network with a better transition-aware input representation. We used two continuous state, discrete action, episodic deployment environments, Acrobot and Puddle World, as described in the appendix and in introductory texts \citep{sutton2018reinforcement}.

In this first experiment we select the hyperparameters for a linear softmax-policy Expected Sarsa agent (from here on, linear Sarsa) from data generated by a simple policy with good coverage. The agent uses tile coding to map the continuous state variables to binary feature vectors (see Sutton \& Barto \citep{sutton2018reinforcement} for a detailed discussion of tile coding). This on-policy, Sarsa agent learns quickly but is sensitive to several important hyperparameters. We investigate several dimensions of hyperparameters including the step-size and momentum parameters of the Adam optimizer, the temperature parameter of the policy, and the value function weight initialization. We choose these hyperparameters because their impact on performance is somewhat transient and can be overcome by continued learning; this reflects our desire for the agent to continually learn and adapt in deployment. 

We used a near-optimal policy for each environment to collect data for building the calibration models. The near-optimal data collection policy for Acrobot can solve the task in 100 steps, and the near-optimal data collection policy in Puddle World achieves an average return of -25. In both cases the policy will provide the system with many examples of successful trajectories to the goal states in the 5000 transition data log.

Our evaluation pipeline involves several steps. First we evaluate the {\em true performance} (steps per episode for Acrobot and return per episode in Puddle World) of each hyperparameter combination in the deployment environment: running for 15,000 steps in Acrobot and 30,000 steps in Puddle World, averaging over 30 runs. 
We then use the data collection policy to generate the calibration data log and learn each model. 
We record the {\em true performance} of the selected hyperparameters to summarize the performance. This whole process---running the data collection policy to generate a data log, learning the calibration model, and evaluating the hyperparameters---is repeated 30 times (giving 30 datasets with 30 corresponding hyperparameter selections). The statistic of interest is the median and distribution of the {\em true performance} for the hyper-parameters selected across runs. In the ideal case, if there is one truly best set of hyperparameters, the system will choose those every time and the variance in {\em true performance} would be zero. 

We also included two baselines. The first is randomly selecting hyperparameters, called Random, to get a sense of the spread in performance; all methods should outperform Random.
We also include an Offline RL algorithm, called Fitted-Q Iteration (FQI) \citep{ernst2005treebased}, that learns a policy from the calibration data and then deploys the learned policy fixed in the deployment environment. For the FQI baseline we simply plot the distribution of performance of each of the 30 extracted policies on the deployment environment. For each policy, we average the performance over 30 random seeds. We tested FQI with a tile coded representation and a NN representation; the tile coded version performed best and we report that result.

\begin{figure*}[t]
	\begin{centering}
		\vspace{-1.3cm}
		\includegraphics[width=0.8\textwidth]{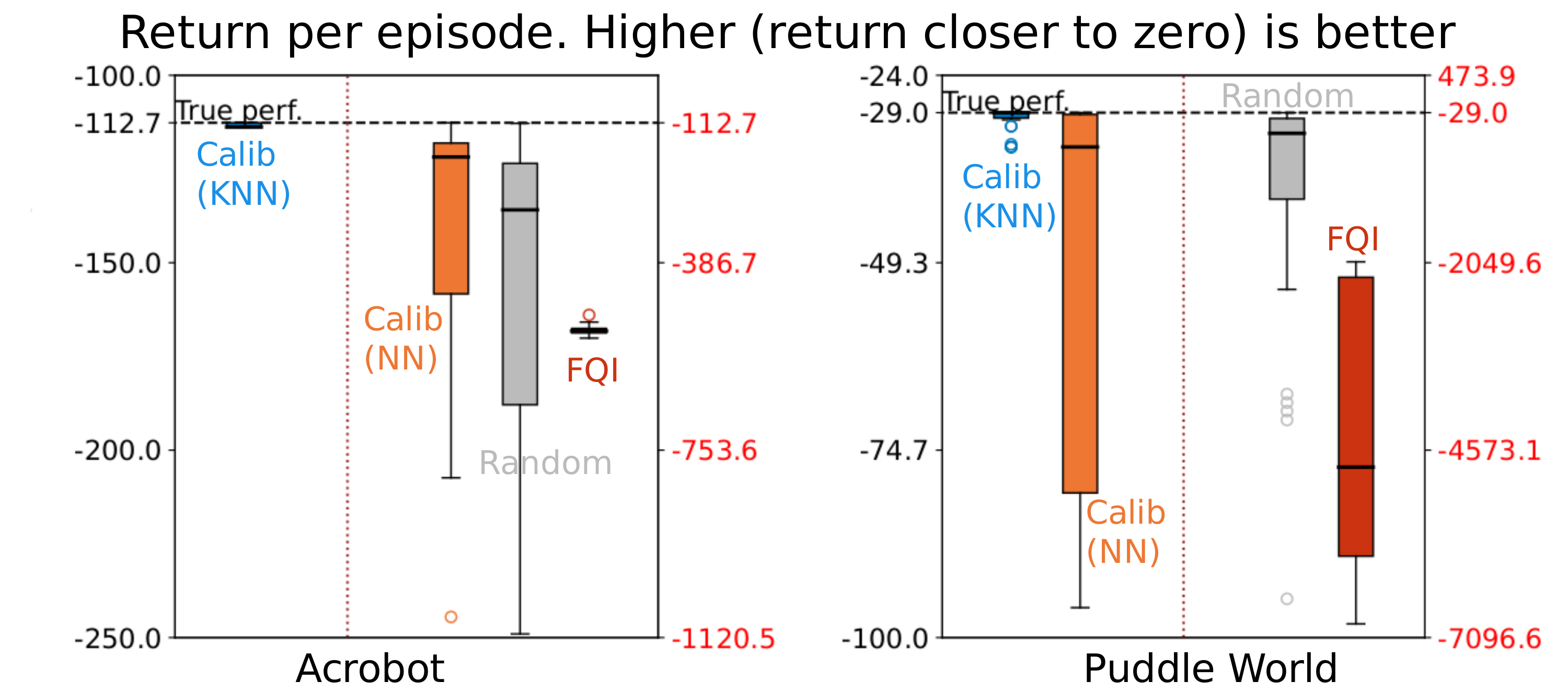}
		
		\caption{\textbf{Hyperparameter transfer with different calibration models.} 
			Each subplot shows the performance of two calibration models compared against two baselines: FQI and random (described in text).
			The dotted horizontal line indicates the average performance of the best hyperparameter setting in the sweep in the deployment environment.
			Each box shows the distribution summarizing the true performance in deployment of the best hyperparameters selected in each run of the experiment.
			We plot the return per episode, thus {\bf higher the better}.
			The LHS of each subplot uses the LHS y-axis and the RHS (separated by the dotted vertical orange line) uses the RHS \textcolor{red}{red} y-axis.
			In each subplot the bold line represents the median, the boxes represent the 25th and 75th percentiles, and outliers are represented as circles. 
			Low variance indicates that the system reliably chooses the same  hyperparameters each run. The performance of the random baseline characterizes the maximum possible variation. Recall the performance of each hyper is precomputed off-line for each hyperparameter combination and is thus not a source of variation in our setup.
		}
		\label{fig:model_works}
	\end{centering}
\end{figure*}

Figure \ref{fig:model_works} summarizes the results. 
In both environments the KNN calibration model performed well, selecting the same hyperparameters as would a sweep directly in the deployment environment. The NN calibration models perform poorly overall. The NN calibration model using raw inputs (no Laplacian encoding) was not as effective, and so we only include results for the NN with the Laplacian encoding in Figure \ref{fig:model_works} and relegate the other to Appendix \ref{apdx:calibration_raw}. Their performance can be unstable, choosing hyperparameters with good performance in some runs, but often choosing poor hyperparameters. 
FQI generally performs worse than even Random. Note that we spent quite a bit of time improving FQI, as well as optimizing over several stepsize and regularization hyperparameters.
This suggests the calibration data log is too limited to extract a good policy and deploy it without additional learning, but the data appears useful for selecting hyperparameters with the KNN calibration model. 

We also used our approach to tune both step-size parameters of a linear Actor-critic agent with tile coding. The KNN calibration model was able to select top performing hyperparameters for Actor-critic in both Acrobot and Puddle World---though the agent performed worse than linear Sarsa (results in Appendix \ref{apdx:actor_critic}).

\subsection{ Experiment 2: Varying Data Collection Policies}
The objective of this experiment was to evaluate the robustness of our approach to changing both the amount of offline data available and the quality of the policy used to collect the data. We experimented with three different policies corresponding to {\em near-optimal}, {\em medium},  and {\em naive} performance to collect 5000 transitions for training our KNN Laplacian calibration model. The near-optimal policy was identical to the one used in the previous experiment. The medium policy was designed to achieve roughly half the visits to goal states after 5000 steps (approximately 90 for Puddle World \& 25 for Acrobot) compared to the near optimal policy. The naive policy was designed such that it achieved significantly fewer visits (approximately 35 for Puddle World \& 12 for Acrobot). We also tried different data log sizes of 500, 1000, and 5000 samples using the medium policy, all shown in Figure \ref{fig:different_datasets_ac}.

\begin{figure*}
	\centering
	\vspace{-1cm}
	\includegraphics[width=0.99\textwidth]{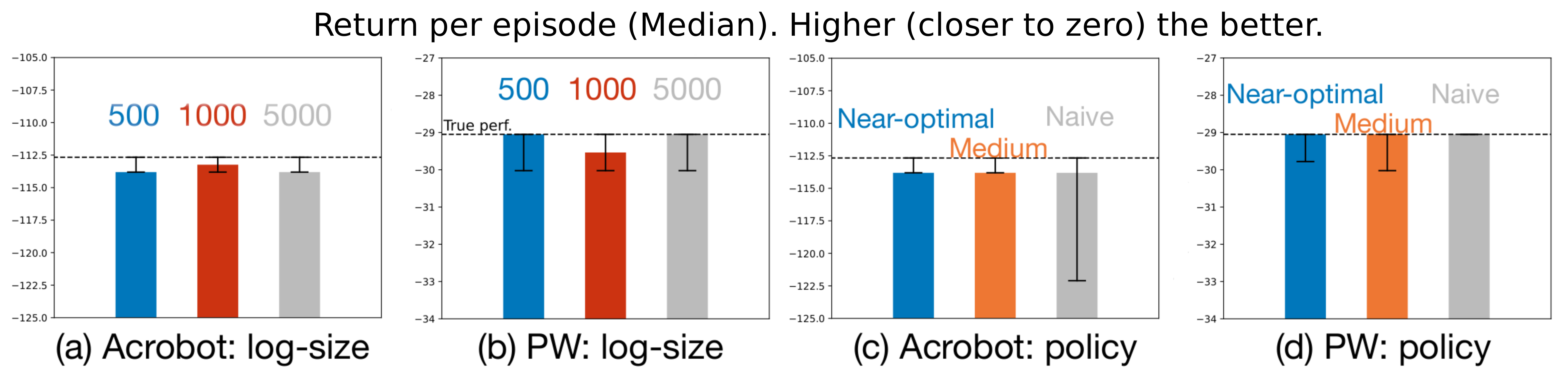}
	\caption{\textbf{The role of data logs.} 
		Plots (a) and (b) show the median deployment performance of hyperparmaters selected from calibration models constructed from different sized data logs (with 25\% and 75\% quartiles). Plots (c) and (d) show the median deployment performance under different policies used to collect data logs for constructing the calibration model. In both subplots, {\bf higher is better}. In these bar plots the median is shown by the top of the colored bar, and the quartiles are shown by the black whiskers. Overall, the utility of the calibration model appears largely insensitive to the data log size and policy in these environemnts; hyperparameters that perform well in deployment can be selected.
	}
	\label{fig:different_datasets_ac}
\end{figure*}

The results in Figure \ref{fig:different_datasets_ac} show that our approach is largely insensitive to data log size and policy in these classic environments. Even 500 transitions contains enough coverage of the state-space and successful terminations to produce a useful calibration model. This is in stark contrast to the FQI results in Experiment 1, where a policy trained offline from the same size data log failed to solve either task. 
Exploration in both these environments is not challenging; therefore, the success of the calibration model is not surprising. This positive outcome, however, reflects that it may be simpler to pick hyperparameters \emph{in some environments}. In Experiment 4, we investigate a failure case in Cartpole.

\subsection{Experiment 3: When the Environment Changes}
Learning online is critical when we expect the environment to change. This can happen due to wear and tear on physical hardware, un-modelled seasonal changes, or the environment may appear non-stationary to the agent because the agent's state representation does not model all aspects of the true state of the MDP. In this latter case it is often best to track rather than converge; to never stop learning (see \citep{sutton2007role}). In our problem setting, the deployment environment could change significantly between (a) calibration data collection and (b) the deployment phase. Intuitively we would expect batch approaches that simply deploy a fixed policy learned from data to do poorly in such settings. The following experiment simulates such a scenario with the Acrobot environment.

The experiment involves two variants of the environment. As before, we collected 5000 transitions using the near-optimal policy in Acrobot and then applied our approach to select good hyperparameters for the linear Sarsa agent. Unlike before, we evaluate the hyperparameters selected on a second, {\em changed} Acrobot environment. In the changed Acrobot environment we doubled the length and mass of the first link length. Our two phase setup changes the dynamics of Acrobot but does not prevent learning reasonably good policies as you will see in the results below. This whole process was repeated 30 times (generating 30 datasets with a corresponding 30 calibration models) to aggregate the results presented in Figure \ref{fig:nonstationary_acrobot}.

We included three baselines to help contextualize the results: (1) transferring the policy from the first environment, (2) transferring the policy learned in the calibration model, and (3) FQI.
The first baseline, called {\em Sarsa (True)}, simply transfers the policy learned in the first Acrobot environment to the changed Acrobot environment (no calibration model was used, hence the label {\em true}). The second baseline, called {\em Sarsa (Calibration}) simply uses the best performing policy learned by Sarsa in our calibration model, where the calibration model is created with data from the first Acrobot environment. Finally, we also included a FQI baseline. We trained a policy using FQI with tile coding on the data generated from the first environment (the same data used to build the calibration model). Then we evaluated the policy learned by FQI on the changed Acrobot environment. These baselines are meant to illustrate how performance might be effected if the environment dynamics changed but a prelearned policy was applied without taking the changes into account, perhaps because no one noticed the environment had changed. In all three baselines the policy evaluated in the second environment is fixed (no learning in deployment). 

\begin{figure*}[h!]
	\vspace{-0.3cm}
	\begin{minipage}[c]{0.4\textwidth}
		\includegraphics[width=0.9\linewidth]{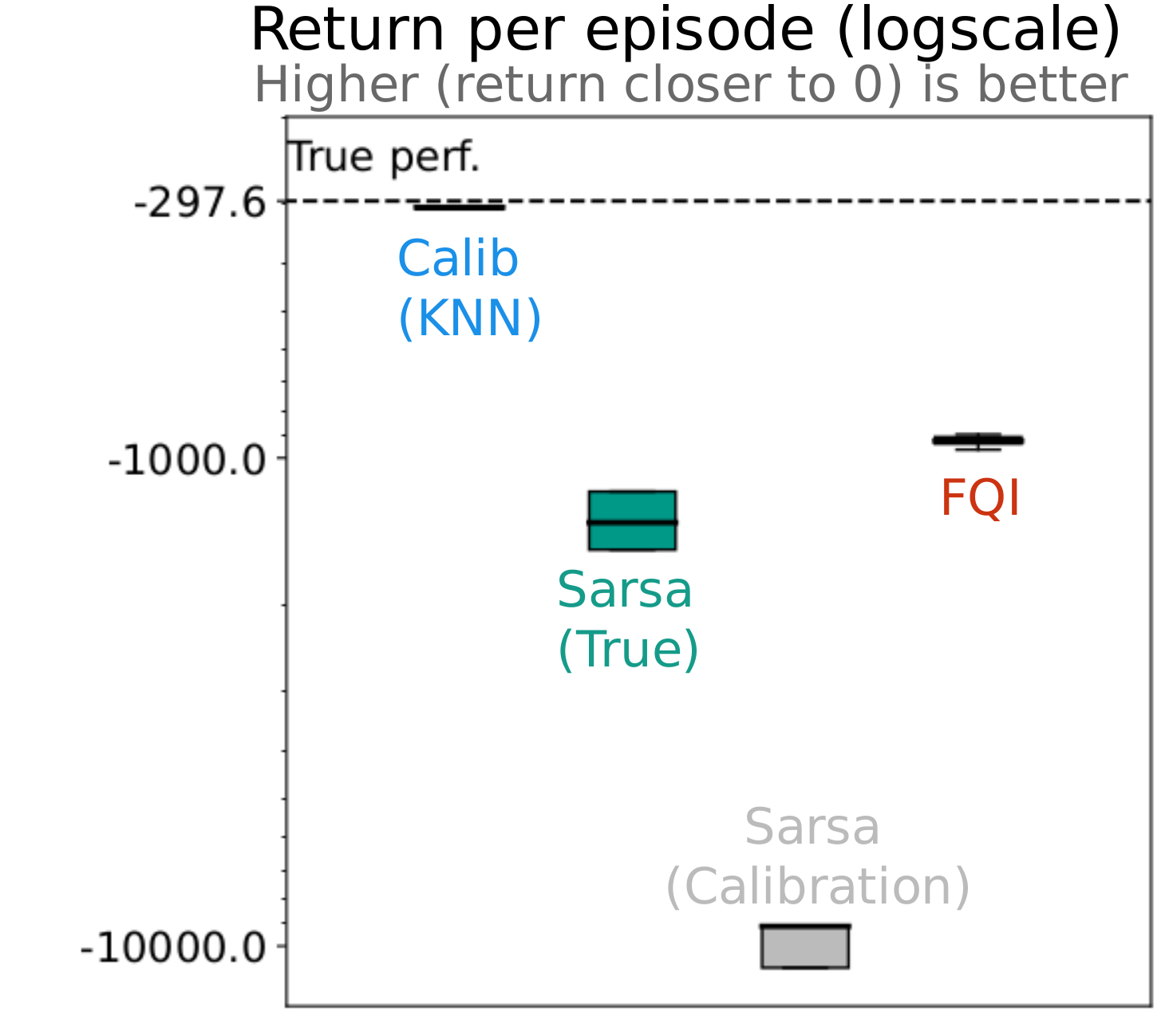}
	\end{minipage}\hfill
	\begin{minipage}[c]{0.6\textwidth}
		
		\caption{
			\textbf{Selecting hyperparameters in the face of non-stationarity.} The plot above summarizes the performance of our approach compared with three fixed-policy transfer approaches (described in text). 
			Selecting hyperparameters for deployment works well even when the environment changes between calibration data collection and deployment. Deploying fixed policies, on the other hand, performs poorly by comparison.}
		\label{fig:nonstationary_acrobot}
	\end{minipage}\hfill
	
\end{figure*}

The results in Figure \ref{fig:nonstationary_acrobot} highlight that transferring fixed policies can be problematic when the environment changes. Our calibration-based approach performs best and appears robust under the abrupt non-stationarity tested here. Clearly, the difference between the two environments is significant enough such that transferring a policy directly learned on the first environment (the Sarsa-True baseline) performs worse than using our approach to select hyperparameters in the calibration model and then learning a new policy from scratch. Interestingly, learning and transferring a policy from the calibration model was worse than using than training on the first environment or training from the calibration data (as in FQI). It is not surprising that transferring hyperparameters and learning in deployment is more robust than transferring fixed policies, in these non-stationary settings.

\subsection{Experiment 4: A Failure Case}
\begin{figure*}[t]
	\centering
	\vspace{-1.3cm}
	\includegraphics[width=0.6\textwidth]{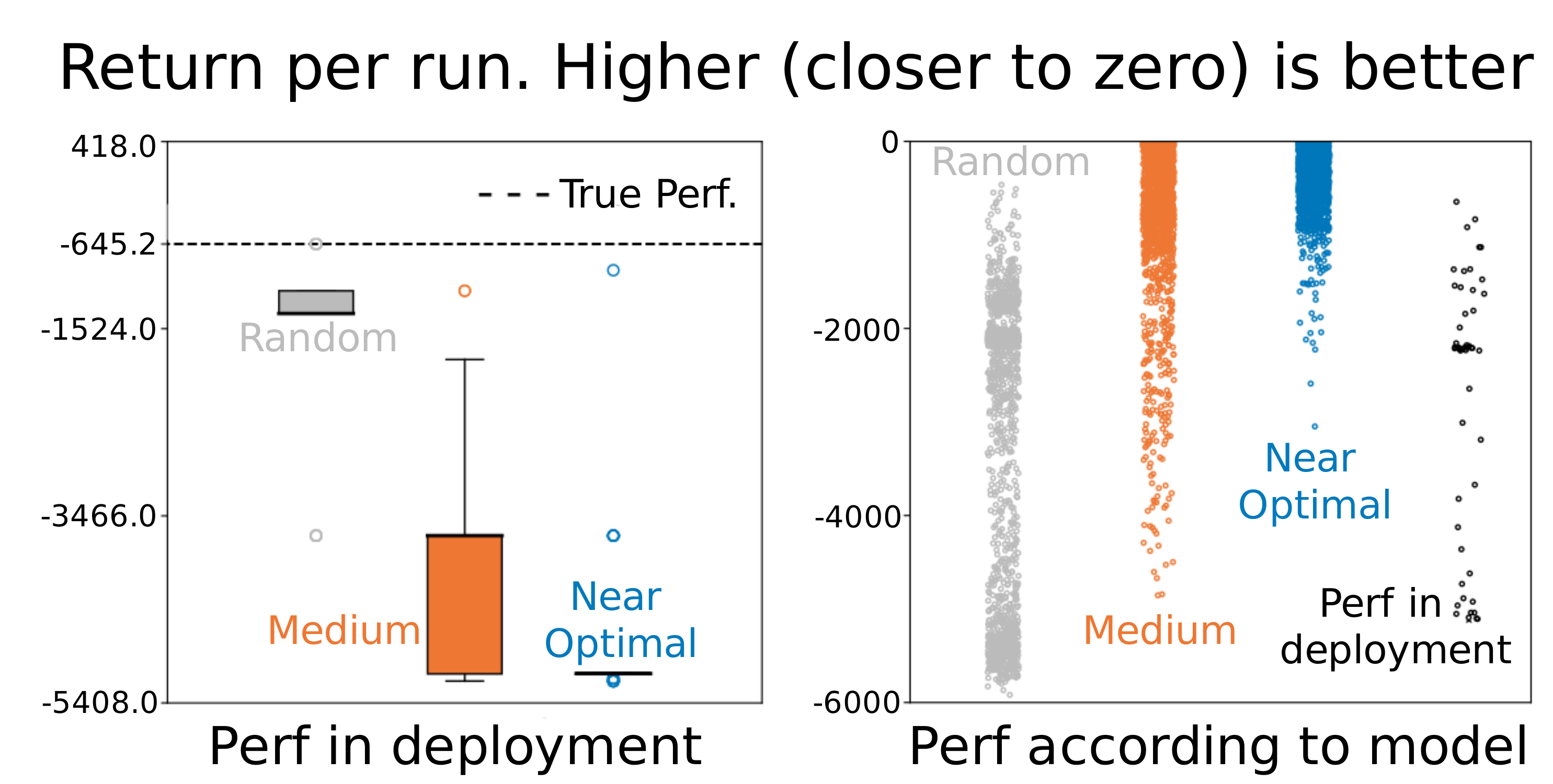}
	\caption{\textbf{Success and failure in Cartpole.} This plot shows performance of three different calibration models constructed from random, near-optimal and medium policies. {\bf Left}: the performance of the hyperparameters in deployment as picked by different calibration models. {\bf Right}: each model's evaluation of all hyperparamters across all 30 runs. Ideally the distribution of performance would match that of the hyperparameter performance in the deployment environment---black dots far right. }
	\label{fig:exp_cartpole}
\end{figure*}

Our approach is not robust to all environments and data collection schemes. In this section we investigate when it can fail. One obvious way our approach can fail is if the agent's performance in the calibration model is always the same: no matter what hyperparameter we try, the system thinks they all perform similarly. To illustrate this phenomenon we use the Cartpole environment. In Cartpole, the agent must balance a pole in an unstable equilibrium as long as it can. If the cart reaches the end of the track or the pole reaches a critical angle, failure is inevitable. Near-optimal policies can balance the pole for hundreds of steps rarely experiencing failures and, thus, visit only a small fraction of the state-action space.  A data log collected from the near-optimal policy would likely produce a calibration model where failures are impossible and all hyperparameters appear excellent. 

We test this hypothesis by looking at three data collection policies. We used a random policy, a near-optimal policy with random initial pole angles, a medium policy that was half as good as the near-optimal policy (twice as many failures). We expect the random policy to provide useful data for the calibration model, whereas the near-optimal policy will cause failure due to the above reason. The interim policy helps understand sensitivity to this issue: we might expect the issue to be resolved with a less optimal policy. 

Figure \ref{fig:exp_cartpole} indeed shows that the dynamics of Cartpole combined with particular data collection policies can render the calibration model ineffective
We see in the left-hand plot that the hyperparameter chosen with the calibration model using random data performs somewhat reasonably, but fails for both the medium and near-optimal policies. Even with random starting states the calibration model for near-optimal policy: the calibration model never simulated dropping the pole. The random policy produced the best calibration model. Unsurprisingly, the random policy drops the pole every few steps and thus the log contained many failures and higher state coverage. Nevertheless, the performance was still poor because there were no examples of balancing the pole for many consecutive steps: the model constructed from random data was still a poor model of the true deployment environment. 

We can see this further, by looking at the performance estimates under the three calibration models. The right-hand plot in Figure \ref{fig:exp_cartpole} shows the performance of all the hyperparameters according to the calibration model (before deployment). The blue dots show that most hyperparameters appear good in calibration when the model is constructed with data from a near-optimal policy. At the other extreme the grey dots show a large spread in performance of hyperparameters when the model is constructed with data from a the random policy. Note that none of the grey dots appear as low on the y-axis compared with the blue and orange dots corresponding to the other two policies. This indicates that calibration with the medium and near-optimal policy models incorrectly inflate the performance of many hyperparameter combinations, whereas calibration with the random-policy model potentially undervalues some hyperparameter combinations.

One could certainly argue that many applications might not exhibit this behavior---especially since it is largely caused by a task with two modes of operation (failing or balancing). 
Extracting a policy initialization from the calibration data (perhaps via behavior cloning) and then using this initial policy in both hyperparameter selection and deployment could avoid these problems in Cartpole, but we leave these experiments to future work. Regardless, this experiment provides an important reminder that we will not anticipate all situations in the deployment of RL in the real world; there is no general black-box strategy for deployment and failures will happen.

\section{Moving Beyond Grid Search}\label{sec_cem}

The calibration model is an offline artifact that we can use as we like without impacting the deployment environment. We can use the model in smarter ways to discover and evaluate good hyperparameters for deployment. In fact, we can leverage the large literature on \emph{algorithm configuration}, which explicitly deals with efficient methods to search over hyperparameters. In this section, we explain how to incorporate these approaches and test two strategies, as a replacement for grid search.  

\subsection{Improving the Hyperparameter Search}

A variety of hyperparameter search approaches were introduced under sequential model-based optimization (SMBO) \citep{hutter2011sequential}, but methods built on Bayesian optimization (BO) \citep{snoek2012practical} have become more popular. Complementary to these approaches are those that direct computation to promising hyperparameters and stop performance evaluations of poor hyperparameters early, as in Hyperband \citep{li2018hyperband}, or that design the algorithm to do both \citep{klein2017fast,falkner2018bohb}. All these approaches attempt to find the maximum of the performance function, assuming that function is expensive to query. 

BO algorithms approximate the performance function $f(\lambda)$, and use this approximation to decide what hyperparameter setting $\lambda$ to test next. The general approach is to (1) maintain a posterior distribution over the performance function $f$, (2) find a candidate set of optimal hyperparameters $\lambda_c$ according to a criteria like expected improvement under the current posterior over $f$, (3) evaluate $\lambda_c$, obtaining $y = f(\lambda_c)$ and (4) update the posterior with sample $(\lambda_c, y)$. Once the algorithm terminates---typically by reaching a time limit---the best $\lambda_c$ out of all the candidates tested is returned, according to the maximal $y$. The primary purpose of learning the posterior $f$ is to direct which $\lambda_c$ should be tested, though some algorithms do solve an optimization at the very end of this procedure to find $\lambda$ with the maximal posterior mean (see \citep{frazier2018tutorial} for a nice overview).   

Due to the importance of hyperparameter optimization for machine learning---in a growing field called Auto ML---the development of BO methods has been focused on large numbers of hyperparameters, for training large models on large datasets. For this highly expensive setting, it is worth carefully crafting advanced algorithms that minimize the need to train and evaluate large models. These complex methods can then be released within packages, to facilitate their use, as they may be difficult to implement from scratch. 

\textbf{BO in our experiments:} We use an open-source package \citep{bayesopt}, which uses gaussian processes for optimizing the hyperparameter setting. We chose to use upper confidence bounds, with a confidence level of 2.576---the default in the package---as the acquisition method. The queue is initialized with 5 random samples and the algorithm is run for 200 iterations.

For our setting, each evaluation is not as complex and we need not use such advanced approaches. Instead, our primary goal is simply to answer: if we allow hyperparameters to be optimized over a continuous set, can we improve on a basic grid search? For this question, we also test two simple approaches: random search and the cross-entropy method (CEM). Random search involves simply testing a fixed number of hyperparameter settings, and selecting the one with maximal performance. Though simple, it is a common baseline because it has been shown to be competitive with more complex search strategies \citep{bergstra2012random}. 

CEM \citep{rubinstein1999crossentropy} is an approach to global optimization of functions, like BO, but is based on a simpler strategy of (1) maintaining a distribution over the inputs (hyperparameters), (2) increasing the likelihood of the top percentile of sampled values under this distribution, according to the performance function. The distribution is simple to sample, and the percentile easy-to-compute, making this approach simpler to use than BO. 

CEM has not been used for hyperparameter optimization, to the best of our knowledge. Likely the reason is that BO strategies provide a more powerful way to carefully reason about what candidate points to sample. CEM instead slowly narrows a distribution over hyperparameters, and does not reason about confidence intervals nor about a criterion (acquisition function) to identify ideal candidates. 
Nonetheless, we include it as a simpler strategy, to investigate performance of using calibration models with a hyperparameter search in-between naive random search and the more advanced BO search. 
We emphasize that it is not critical which hyperparameter optimization approach is used within our framework; any method can be swapped in. 


\textbf{CEM in our experiments:} Our setting has two nuances compared to the typical setting where CEM is used: our function is expensive to query and we only get a stochastic sample. We provide a modified CEM algorithm, that still reflects the general framework for CEM, but using an incremental update---similar to stochastic gradient descent---to account for the stochasticity in our function query. The algorithm is summarized in Algorithm \ref{alg:AltCEM} in Appendix \ref{app_cem}.

\subsection{ Experiment 5: Hyperparameter Tuning with Alternative Optimization Approaches}

In this section we compare grid search, random search, Bayesian optimization, and our simple CEM approach for hyperparameter selection. Are these approaches interchangeable in our setup? Does searching a continuous hyperparameter space result in performance gains and at what cost? The goal of the experiment is to highlight that alternative hyperparameter optimization approaches beyond a basic grid search are possible and to investigate if they are beneficial.

In this experiment, we use the same settings as above, but now optimize the temperature $\tau$ and stepsize $\alpha$ as continuous values in the ranges [0.0001, 5.0] and (0.0, 0.1] respectively for Acrobot, and [0.0001, 10.0] and [0.0, 1.0] respectively for Puddle World. The random search approach simply generates $k$ possible hyperparameters from the continuous ranges above and evaluates each in parallel in the calibration model. The best performing hyperparameters according to the calibration phase are used in deployment. Both random search and CEM use 100 iterations, to make computation used comparable to grid search, while Bayesian optimization uses 200 iterations.

\begin{figure*} [t]
	\begin{centering}
		\includegraphics[width=0.8\textwidth]{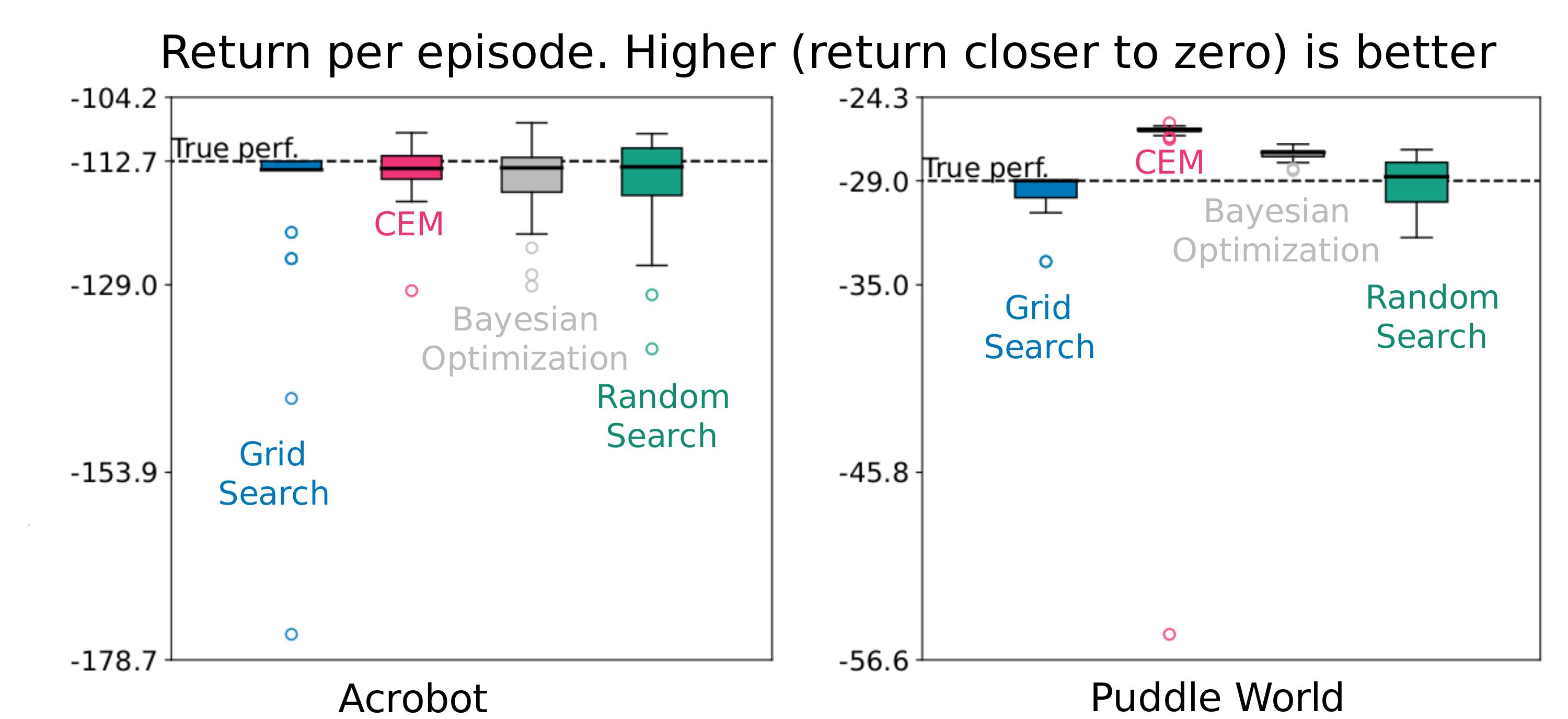}
		\caption{\textbf{Finding good hyperparameters via black-box optimization in calibration.} Here we compare four different strategies for optimizing hyperparameters: (1) grid search (which we have used in all previous experiments),  (2) random search, (3) our CEM procedure, and (4) the Bayesian optimization (BO) approach. The y-axis is same as the one in Figure \ref{fig:model_works}. Generally, all four approaches perform well highlighting the robustness of using the calibration model and transferring hyperparameters to deployment. Random search is better or comparable to grid search, but this is not a surprising as random search is typically found to be a strong baseline in hyperparameter optimization. Note the {\em True performance} in the plots above represent the deployment performance of the best hyperparameter combination from a discrete set; the same set used by grid search. CEM, BO and random search can obtain higher deployment performance because they search a continuous range and thus find better hyperparameter settings. This is one of the major benefits of using better optimization methods in calibration than grid search.}
		\label{fig:exp_cem}
	\end{centering}	
\end{figure*}

Random search, Bayesian optimization and CEM outperform grid search, as we can see in Figure \ref{fig:exp_cem}.
The performance improvements are especially stark in Puddle World. Even when tuning only on the calibration model, the agent can outperform the best hyperparameters found by a grid search on the true environment. This is why the return for CEM is higher than the dotted line showing the performance of the best hyperparameters within the set used for the grid search. These results are promising, in that they show more carefully optimizing hyperparameters on the calibration model helps rather than hurts. A possible hypothesis, apriori, could have been that optimizing more carefully to the calibration model could cause odd or very poor hyperparameters to be chosen and that the restricted set in the grid search actually helped avoid this failure. These results suggest otherwise, and in fact highlight that our previous results could have been produced even more consistent performance with a smarter hyperparameter algorithm.  

This experiments in this paper highlight the generality and flexibility of our approach. The calibration model can take different forms. Data collection can be done in a number of different ways. Hyperparameters can be systematically searched or optimized. In the end, numerous other specializations and algorithmic innovations could be included to further optimize performance in real deployment scenarios.

\section{Conclusion}\label{sec:conclusion}
In this work, we introduced the Data2Online problem: selecting hyperparameters from log of data, for deployment in a real environment. 
The basic idea is to learn a calibration model from the data log, and then allow the agent to interact in the calibration model to identify good hyperparameters. Essentially, the calibration model is treated just like the real environment. We provide a simple approach, using k-nearest neighbors, to obtain a calibration model that is stable under many iterations and only produces real states. We then conduct a battery of tests, under different data regimes. 

Naturally, as the first work explicitly tackling this problem, we have only scratched the surface of options. 
There is much more to understand about when this strategy will be effective, and when it might fail. 
As we highlight throughout, this problem should be more feasible than offline RL, which requires the entire policy to be identified from a log rather than just suitable hyperparameters for learning. Our own experiments highlight that offline methods that attempted to learn and deploy a fixed policy performed poorly, whereas identifying reasonable hyperparameters was a much easier problem with consistently good performance across many policies and even small datasets. Nonetheless, we did identify one failure case, where the data resulted in a model that made the environment appear too easy and so most hyperparameters looked similar. Much more work can be done, theoretically and empirically, to understand the Data2Online problem.





\bibliography{tmlr}
\bibliographystyle{tmlr}

\newpage
\appendix
\section{Algorithm Details}\label{apdx:details}

In this section, we explain the calibration model, agents used in experiments, and baselines in detail then provide more detailed pseudocode.

\subsection{Calibration Model} \label{apdx:calibration-detail}
Below, Algorithm \ref{treeconstr} explains tree construction in discrete and continuous action spaces, and Algorithm \ref{simulatorstart} indicates the process of sampling the start state in each episode. The initialization of calibration model has been indicated in Algorithm \ref{alg_knn_calib}, and the step function has been shown in Algorithm \ref{alg_knn_sample}.

\begin{algorithm}[htb]	
	\caption{KDTreeConstruction}\label{treeconstr}	
	\textbf{Input:}
	Transitions with tuples of the form $(\Phi, A, S', R, T)$
	\begin{algorithmic}[1]
		\IF {discrete action space}
		\STATE Construct KD-trees $Trees$ for each action: use $\Phi$ as key and $S', R, T$ as value.
		\ELSE
		\STATE Construct KD-trees $Trees$: use $(\Phi, A)$ as key and $S', R, T$ as value.
		\ENDIF
	\end{algorithmic}
	\textbf{Return:} $Trees$
\end{algorithm}

\begin{algorithm}[htb]	
	\caption{EnvStart (CalibrationModel)}\label{simulatorstart}
	\textbf{Input:}
	A set of start states $\mathcal{B}$
	\begin{algorithmic}[1]
		\STATE Randomly pick $s \in \mathcal{B}$
	\end{algorithmic}
	\textbf{Return:} $s_i$
\end{algorithm}


\begin{algorithm}[htb]	
	\caption{KDTreeSearch}\label{treesearch}	
	\textbf{Input:} Representation $\phi$, Action $a$, KD-tree(s) $Trees$, number of nearest neighbors $k$.
	\begin{algorithmic}[1]
		\IF {discrete action space}
		\STATE $\{(\phi, s', r, T, d)\} \leftarrow$ Search for the k nearest neighbors of $\phi$ in the tree corresponding to the action $a$ ($Trees[a]$), where $d$ refers to the distances.
		\ELSE
		\STATE $\{(\phi, s', r, T, d)\} \leftarrow$ Search for the k nearest neighbors of $(\phi, a)$ in the tree ($Trees$), where $d$ refers to the distances.
		\ENDIF
	\end{algorithmic}
	\textbf{Return:} $\{(s', r, T, d)\}$
\end{algorithm}

In discrete action space, we construct a k-nearest-neighbor dictionary in advance then search in table at every time step to speed-up learning. In this case, Algorithm \ref{alg_knn_calib} and Algorithm \ref{alg_knn_sample} are replaced by Algorithm \ref{alg_knn_calib_faster} and \ref{alg_knn_sample_faster} separately.

\begin{algorithm}[htb]
	\caption{Learn KNN Calibration Model}\label{alg_knn_calib_faster}
	\textbf{Input:} dataset $\dataset$ with tuples of the form $(S, A, S', R)$, number of nearest neighbors $k$ (default $k = 3$) \\
	\textbf{Constructs:} Representation $\psi$ for distances, k-nearest-neighbor dictionary $Neighbors$ for fast nearest neighbors search, $R_{\text{default}}$ default return
	\begin{algorithmic}
		\STATE $\psi \leftarrow $LaplaceRepTraining$(\dataset)$ in Algorithm \ref{reptraining}
		\STATE $\mathcal{S}_{key} = list(\mathcal{B}\cup \{s'|s' \in \dataset\})$
		\STATE Reorder $\dataset$ array to follow the states' order in the list $\mathcal{S}_{key}$
		\STATE $Trees \leftarrow $KDTreeConstruction$(\psi(\mathcal{S}_{key}), \dataset)$ in Algorithm \ref{treeconstr}
		\STATE $Neighbors \leftarrow $KNNTableConstruction$(\psi, \dataset, k, Trees)$ in Algorithm \ref{knntableconstr}
		\STATE Extract starting states $\mathcal{B} \subseteq \mathcal{D}$
		\STATE Set $R_{\text{default}}$ to minimum return in dataset (pessimistic default return)
	\end{algorithmic}
\end{algorithm}

\begin{algorithm}[htb]
	\caption{KNNTableConstruction}\label{knntableconstr}
	\textbf{Input:} Representation function $\psi$, keys $\mathcal{S}_{key}$, dataset $\dataset$, number of nearest neighbors $k$ (default $k = 3$), k-d tree for each action $trees$ \\
	\textbf{Constructs:} k-nearest-neighbor dictionary $Neighbors$
	\begin{algorithmic}
		\STATE $Neighbors \leftarrow []$
		\FOR {$a \in \mathcal{A}$}
			\STATE $Neighbors.append([])$
			\FOR {$s \in \mathcal{S}_{key}$}
			\STATE $knns \leftarrow KDTreeSearch(\psi(s), a, trees, k)$
			\STATE $entry \leftarrow []$
			\FOR {$(s',r,T,d) \in knns$}
				\STATE $entry.append((s',r,T,d,\mathcal{S}_{key}.index(s')))$
			\ENDFOR
			\STATE $Neighbors[a].append(entry)$
			\ENDFOR
		\ENDFOR
	\end{algorithmic}
\end{algorithm}

\begin{algorithm}[t]
	\caption{QuickSample}\label{alg_knn_sample_faster}
	\textbf{Input:} K-nearest-neighbor dictionary $Neighbors$, current state index $index_t$, Action $a$; if no action is given, procedure returns a start state. \\
	\textbf{Global Variable:} The index of the next state in the search table $currentIndex$, initialized by the index of the chosen starting state when each episode starts. \\
	\textbf{Return:} Tuple of the reward and next state $(r_i, s'_i)$.
	\begin{algorithmic}
		\IF {No action is given}
		\RETURN Sample $s \in \mathcal{B}$ uniform randomly
		\ENDIF
		\STATE // Query the table for the $k$ nearest neighbors to $s_t,a_t$, to get potential next states and rewards
		\STATE ${ (s'_i, r_i, d_i, index_i)}_{i=1}^k \leftarrow Neighbors[a_t][currentIndex]$ 
		\STATE // If closest neighbor is far, then return a default return and terminate
		\IF{$\min_i d >$ threshold}
		\STATE $r \leftarrow R_{\text{default}}$,  $s' \leftarrow \text{terminal}$
		\RETURN $(r, s')$
		\ENDIF
		\STATE Sample $ i \in [1,k]$ according to probabilities $p_i = 1- \frac{d_i}{\Sigma_{j \in [1,k]} d_j}$
		\STATE $currentIndex \leftarrow index_i$
		\RETURN $(r_i, s'_i)$
	\end{algorithmic}
\end{algorithm}

\subsubsection{Distance Metrics}\label{app_laplace}
We explain how we learn the Laplace representation in detail in this section.
The Laplace representation is trained by pushing the representations of two random states to be far away from each other, while encouraging the representations of \emph{close} states to be similar. Two states are close if it only takes a few steps for the agent to get to one state from the other. The Laplace representation $\psi(s)$ is trained using a two-layer NN on a batch of data, using the approach given by \citep{wu2018laplacian}. In training, the close state is sampled according to a transition distribution $P_{\kappa}$. When a state $s_0$ in a trajectory $s_0, s_1, s_2, \cdots, s_z$ is chosen, the close state is sampled in the following trajectory $s_1, s_2, \cdots, s_z$ with normalized probability $[\kappa, \kappa^2, \cdots, \kappa^z]$. The loss for training the Laplacian representation is calculated from close state pair $( s_i, s_{i+u} )$ and a random state pair $ (s_i, s_j) $. 
{\footnotesize
	\begin{align*}
	\sum_{s_i \sim \dataset, u \sim P_{\kappa}} \| \psi_\theta(s_i) - \psi_\theta(s_{i+u}) \|_2^2  
	+ \sum_{s_i, s_j \sim \dataset} \left(\beta(\psi_\theta(s_i)^T \psi_\theta(s_{j}))^2 - \zeta\|\psi_\theta(s_i)\|_2^2 - \zeta\|\psi_\theta(s_j)\|_2^2 \right)
	\end{align*}
}
where $\beta$ and $\zeta$ control the weight of each term.

During training, we set the maximum training step as 30,000 steps, the averaged loss is checked every 1000 steps. The representation is considered as converged if the averaged loss over the past 1000 steps does not decrease in 3 consecutive checks, thus we cut off training earlier.

\begin{algorithm}[htb]	
	\caption{LaplaceRepTraining}\label{reptraining}	
	\textbf{Input:} Dataset $\dataset$ with tuples of the form $(S, A, S', R)$, Dataset size $N$, Weights in loss function $\beta, \zeta$, Learning rate$\alpha$, Batch size $b$
	\begin{algorithmic}[1]
		\STATE Initialize representation function $\psi$
		\WHILE{not converge} {
			\STATE $Samples \leftarrow \{s_i\}, i\in [N]$
			\STATE $Closes \leftarrow \{s_{i+u}\}, u \sim P_{\kappa}$
			\STATE $Randoms \leftarrow \{s_j\}, j\in [N]$
			\STATE $\phi_s \leftarrow \psi(Samples)$
			\STATE $\phi_c \leftarrow \psi(Closes)$
			\STATE $\phi_r \leftarrow \psi(Randoms)$
			\STATE Update $\Phi$ with $Loss = (\phi_s - \phi_c)^2 + \beta ((\phi_s^T \phi_r)^2 - \zeta \|\phi_s\|^2 - \zeta \|\phi_r \|^2 )$
			
		}\ENDWHILE
	\end{algorithmic}
	\textbf{Return:} $\psi$
\end{algorithm}

\subsection{Agents}

We describe the agents used in our experiments: Expected Sarsa, Actor-Critic, and Fitted-Q Iteration. The random selection baseline will also be explained at the end.

\subsubsection{Expected Sarsa}
Expected Sarsa is an online learning agent estimating action values. 
We used the \emph{Sarsa($\lambda$)} algorithm from the introductory text \citep{sutton2018reinforcement}, while using Expected Sarsa update and linear function approximator for estimating action values.
The function approximator is parameterized by $\textbf{w}$ and states are projected to binary features by tile coding.
It is updated through TD-error $\delta=R+\gamma\displaystyle\sum_{a}\pi(a \mid S')\hat{q}(S',a,\textbf{w})-\hat{q}(S,A,\textbf{w})$, where $\pi$ refers to the policy and $\gamma$ is the discount rate.
We also used Adam optimizer to learn adaptive step-sizes \citep{kingma2014adam}. 
$\pi$ is a softmax policy such that at timestep $t$, $\pi_t(a \mid S_t) = \dfrac{e^\frac{\hat{q}(S_t, a, \textbf{w}_t)}{\tau}}{\displaystyle\sum_{b\in\cA} e^\frac{\hat{q}(S_t, b, \textbf{w}_t)}{\tau}}, \forall a\in\cA$, where $\tau$ is the temperature.

\subsubsection{Actor-Critic}
Actor-Critic is a policy gradient method that learns an actor and a critic. 
We used the \emph{One-step Actor-Critic (episodic)} from the introductory text \citep{sutton2018reinforcement} with SGD optimizer. We used a softmax policy actor
and a linearly parameterized critic. 
The actor learns a policy parameterized by $\theta$ that is used for choosing action at each timestep, while the critic learns a state value function parameterized by $\textbf{w}$ to criticize the action online.
At each timestep, the critic is updated regarding the TD-error $\delta = R + \gamma \hat{v}(S', \textbf{w})  - \hat{v}(S, \textbf{w})$, while the actor is updated according to the direction that the critic suggests.

\subsubsection{Fitted-Q Iteration}


\begin{algorithm}[htb]
	\caption{Regularized Fitted Q Iteration}\label{alg:fqi}
	\textbf{Input:} Learning rate $\alpha$, Mini-batch size $n_\text{batch}$, Dataset $\dataset$, Training iterations $K$, Target network synchronization frequency $T$; \\
	Set exponential decay rates for moment estimates $\beta_1=0.9, \beta_2=0.999$, small number $e = 10^{-8}$; Randomly initialize action-value function $\hat{q}_{\textbf{w}}$, parametrized by $\textbf{w}$
	\begin{algorithmic}[1]
		\STATE $\textbf{m} = \textbf{0}$
		\STATE $\textbf{v} = \textbf{0}$
		\FOR {\text{ iteration $k = 1$, \dots, $K$}}
		\IF {$mod(k,T)==0$}
		\STATE Sync target network: $\textbf{w}^{\prime} \gets \textbf{w}$
		\ENDIF
		\STATE Sample random mini-batch $\{(S_i, A_i, R_i,S^{\prime}_{i}, \gamma_i) \}^{n_\text{batch}}_{i=1}$ from $\dataset$
		\STATE (Adam optimizer update:)
		\STATE \hspace*{5mm} Compute the gradient $\textbf{g}$ according to equation \ref{fqi-grad}
		\STATE \hspace*{5mm} $\textbf{m} \gets \beta_1 \textbf{m} + (1 - \beta_1) \textbf{g}$
		\STATE \hspace*{5mm} $\textbf{v} \gets \beta_2 \textbf{v} + (1 - \beta_2) (\textbf{g} \odot \textbf{g})$
		\STATE \hspace*{5mm} $\hat{\textbf{m}} = \textbf{m} / (1 - {\beta_1}^{k})$
		\STATE \hspace*{5mm} $\hat{\textbf{v}} = \textbf{v} / (1 - {\beta_2}^{k})$
		\STATE \hspace*{5mm} $\textbf{w} \gets \textbf{w} - \alpha \hat{\textbf{m}} \odot \dfrac{1}{\sqrt{\hat{\textbf{v}}} + e} $
		\ENDFOR
	\end{algorithmic}
	\textbf{Return:} $\hat{q}_{\textbf{w}}$
\end{algorithm}

Fitted Q Iteration (FQI) is a classic batch RL algorithm. The policy is learned from the offline dataset, then deployed in the real environment, without online learning. This algorithm is included as a baseline, comparing our method with transferring a fixed policy.
At the $k^{\text{th}}$ iteration, we sample transitions $\{ (S_i, A_i, R_i,S^{\prime}_{i}, \gamma_{i} ) \}^{n_\text{batch}}_{i=1}$ from the dataset and Regularized Fitted Q Iteration (RFQI) minimizes the regularized mean squared temporal-difference error (MSTDE) on this mini-batch:
\begin{equation} \label{fqi-loss}
\begin{aligned}
L_k (\textbf{w}) &= \sum_i \Vert y_i - \hat{q}_{\textbf{w}}(S_i, A_i) \Vert^2 + \lambda_f \text{Pen}(\hat{q}_{\textbf{w}})
\end{aligned}
\end{equation}

where $\hat{q}_{\textbf{w}}$ is the action value approximate parameterized by $\textbf{w}$, $y_i = R_i + \gamma_{i} \max_a \hat{q}_{\textbf{w}^{\prime}}(S^{\prime}_{i}, a)$ is the target for transition $i$, $\textbf{w}^{\prime}$ is the fixed target parameters, $\text{Pen}(\hat{q}_{\textbf{w}})$ is a penalty term. and $\lambda_f$ is the regularization coefficient. The gradient of the loss is obtained by differentiation with respect to the weight parameter $\textbf{w}$:

\begin{equation} \label{fqi-grad}
\begin{aligned}
\nabla_{\textbf{w}} L_k (\textbf{w}) =& \sum_i \left[ \left( \hat{q}_{\textbf{w}}(S_i, A_i) - y_i \right) \nabla_{\textbf{w}} \hat{q}_{\textbf{w}}(S_i, A_i) \right] + \lambda_f \nabla_{\textbf{w}} \text{Pen}(\hat{q}_{\textbf{w}}) \\
\end{aligned}
\end{equation}

For action-value approximation we tried both neural networks $\hat{q}^{\text{NN}}_{\textbf{w}}: \mathcal{S}\times\mathcal{A} \rightarrow \mathbb{R}$, and linear function approximation $\hat{q}^{\text{TC}}_{\textbf{w}}(\cdot,\cdot)=\phi^{\intercal} \textbf{w} $, where $\phi: \mathcal{S}\times\mathcal{A} \rightarrow \mathbb{R}^d$ is the tile coding feature mapping. We set the penalty to squared L2 norm of the weights $\text{Pen}(\hat{q}_{\textbf{w}})=\Vert \textbf{w} \Vert^{2}_{2}$. The detailed training algorithm is described in Algorithm \ref{alg:fqi}.

\subsection{Random Selection Baseline}
The random selection baseline simulates the case when we do not know the best hyperparameter setting thus randomly pick one from the list.
In practice, we randomly choose one hyperparameter setting among all hyperparameters for each run. The random seed of each run is as same as the one used in agent learning. Then we check the true performance of the selected hyperparameter learning in the true environment.

\section{Experiment details}\label{apdx:exp_details}

\subsection{Environments} 


We used three environments in our experiments - Acrobot, Puddle World, and Cartpole. 

We used the Puddle World environment with the transition dynamics, reward function, and actions exactly as described in \citep{sutton1996generalization}. Puddle world is an episodic task where the agent starts at a random position in a $1\times1$ area in the environment. The episode ends when the agent reaches the goal region in the upper right corner of the $1\times1$ area. The objective in puddle world is to reach the goal region as fast as possible while avoiding the puddle which gives the agent negative rewards of high magnitude. The deeper the agent gets into the puddle, the lower the reward it gets. Otherwise, the agent gets -1 per step until it gets to the goal. The state-space is 2-dimensional, containing the $x$ and $y$ coordinates. The agent has 4 actions---left, right, up, down.

We used the Acrobot environment with the transition dynamics, reward function, and actions exactly as described in \citep{sutton2018reinforcement}. Acrobot consists of 2 connected links with the top joint fixed and torque applied to the bottom joint. It is an episodic task where the links start in the rest position, pointing downwards. The episode ends when the tip of the bottom link reaches a specific height level. The objective is to raise the tip of the acrobot above the height level as fast as possible. The state space is 6-dimensional, containing the $\cos$ and $\sin$ value of the angle between the first link and the vertical line and the angle between 2 links. The last 2 dimensions are the angular velocities of 2 links. At the beginning of each episode, each dimension of the state is randomly set to be in range -0.1 to 0.1. The reward of each step is -1. The agent is trained to cross the height level with the least number of steps. The agent has 3 actions--- $+1, 0, -1$ torque applied on the joint connecting the two links.

We used the Cartpole environment with the transition dynamics, reward function, and actions exactly as described in \citep{sutton2018reinforcement}. However, we introduce some random gaussian noise with mean = 0 and stddev = 0.1 over the effect of actions. Cartpole consists of a horizontally moving cart and a pole attached on top of it. The cartpole starts with the cart in the centre of the track, and the pole vertical. We use the continuing version of cartpole where the cartpole is transitioned to the start state when the pole falls below some angle or when the cart goes off the track. Note that we do not reset the episode. The objective is to balance the pole vertically by moving the cart left or right. The agent gets a negative reward whenever the pole falls below some angle or when the cart goes off the track. Otherwise, it gets a reward of 0. The state space is 4-dimensional, containing cart position, cart velocity, pole angle, and pole angular velocity. At the beginning of each episode, each dimension of the state is randomly set to be in range -0.05 to 0.05. The agent has 2 actions - left and right.

\subsection{Collection of offline logs of data}
In real-world systems, we usually have only one dataset to choose the hyperparameters and one chance to deploy the agent. This case corresponds to having 1 random seed in our experiments. To evaluate the calibration model fairly and robustly, we tested it with $30$ random seeds and reported the hyperparmeter selected for all random seeds. Thus there were 30 offline datasets in total, each of them was collected with a different seed. 
In our experiments, though we used simulated environments, we treated them as if they are real world environments. We did not assume access to the underlying environment to select hyperparameters for deployment. Instead, we assumed access to offline logs of collected data from these environments. 

The dataset was collected with a pre-trained fixed policy. For Experiments 1 and 3, we used near-optimal policy to collect 5k transitions. The quality of the policy is determined by the average performance over the latest 1000 steps. In acrobot, the policy is considered as near-optimal when the averaged length of episode is less than 100, while the standard in Puddle World is that the averaged return is larger than -40. During policy learning, once the performance is above the given threshold, we cut-off learning and use the saved policy to collect data. 

In Experiment 2, we investigated the role of data collecting policy and dataset size. In these experiments, we controlled the data quality in a stricter manner. For the data collecting policy experiment, we collected 5k transitions using near-optimal policy, medium policy, and naive policy. We made sure that each dataset had total episodes within a minimum and maximum number. In Acrobot, we require there are at least 50, between 20-30, and between 10-15 episodes in the 5k transitions dataset, for near-optimal, medium, and naive policy respectively. In Puddle World, the number of episodes are more than 200, between 80-100, and between 20-50 respectively. For the dataset size experiment, we used the medium policy to collect 5k, 1k, and 500 transitions. For Acrobot, we made sure the 5k, 1k, and 500 transitions datasets had 20-30, 4-6, and 2-3 episodes respectively. For Puddle World, we made sure they had 80-100, 16-20, and 8-10 episodes respectively.

In Experiment 4, for Cartpole, we collected 10k transitions using a near-optimal policy, a medium policy, and a random policy. The datasets collected by these three policies had 40-50, 80-125, and 400-500 episodes/failures respectively.

In Experiment 5, we used the 500 transitions dataset collected by a medium policy from Experiment 2.

\subsection{Hyperparameter list}
The hyperparameter settings are reported in this section.

\subsubsection{Agents Hyperparameters}

\paragraph{Expected Sarsa}
The Expected Sarsa agent used tile coding features as its input \citep{sutton2018reinforcement}, with 16 tilings and 8 tiles. It used Adam optimizer and optimistic initialization of action values for early exploration. In Adam optimizer, the second momentum was set as 0.999, the other one was swept and reported below. We kept the policy stochastic by using a softmax function over the action values and sampled the action from the induced probability distribution. 

In the acrobot experiments, we did grid search over the following hyperparameters resulting in 54 combinations:
\begin{enumerate}
	\item Adam optimizer learning rate $\alpha$: \{0.003, 0.03, 0.3\}
	\item Adam optimizer momentum $\beta_1$: \{0.0, 0.9\}
	\item Softmax temperature $\tau$: \{1.0, 10.0, 100.0\}
	\item Optimistic weight initialization: \{0.0, 4.0, 8.0\}
	\item Eligibility trace parameter $\lambda_{e}$ = 0.8
\end{enumerate}

In the puddle world experiments, we did grid search over the following hyperparameters resulting in 54 combinations:
\begin{enumerate}
	\item Adam optimizer learning rate $\alpha$: \{0.01, 0.03, 0.1\} 
	\item Adam optimizer $\beta_1$: \{0.0, 0.9\}
	\item Softmax temperature $\tau$: \{1.0, 10.0, 100.0\}
	\item Optimistic weight initialization: \{0.0, 8.0, 16.0\}
	\item Eligibility trace parameter $\lambda_{e}$ = 0.1
\end{enumerate}

In the cartpole experiments, we did grid search over the following hyperparameters resulting in 54 combinations:
\begin{enumerate}
	\item Adam optimizer learning rate $\alpha$: \{0.03, 0.1, 0.3\}
	\item Adam optimizer $\beta_1$: \{0.0, 0.9\}
	\item Softmax temperature $\tau$: \{0.1, 1.0, 10.0\}
	\item Optimistic weight initialization: \{0.0, 6.0, 12.0\}
	\item Eligibility trace parameter $\lambda_{e}$ = 0.023
\end{enumerate}

In Experiment 3, Figure \ref{fig:nonstationary_acrobot}, we learned the policy in the original Acrobot then transfer it to a changed Acrobot with an increased first link.
To learn a good policy (i.e. the number of steps per episode converges and is near-optimal when using this policy) in the orginal environment, we gave the agent enough time (50,000 timesteps) to make sure the performance converges before the agent runs out of the timestep budget. In transferring step, the performance is measured for 15,000 timesteps. The transferred policy was fixed during this period. In calibration hyperparameter transfer, we used the same hyperparameter setting as chosen in experiment 1, and let the agent learn from scratch for 15,000 steps.
In other experiments, we evaluated each policy for 15k timesteps in acrobot, and for 30k timesteps in puddle world. 

For the calibration model, we average the inner loop performance over 10 random seeds, which means the performance in calibration model is averaged over 10 runs given the same dataset. 30 random seeds are tested for the outer loop (thus there are 30 datasets in total). We ensure the agent learns from at least 30 episodes. For example, the timeout setting is 500 when the number of learning steps is 15k.

\paragraph{Actor-Critic}
The Actor-Critic agent used the same tile coding schema as the Expected Sarsa (16 tilings and 8 tiles). The actor uses one function approximator for each action to obtain a list of scores, and the scores of all the actions are converted to probabilities using a softmax function. The critic also uses function approximator to predict the value of a given feature. In our experiments, both the actor and the critic were zero-initialized and used SGD optimizer.

To eliminate some hyperparameter combinations that are less meaningful, we swept actor's learning rate and the ratio between critic's learning rate and actor's learning rate. This is because of the prior knowledge that the actor's learning rate is usually smaller than the critic's learning rate in practice.

In Acrobot and Puddle World experiments, we did grid search over the following hyperparameters resulting in 36 combinations:

\begin{enumerate}
	\item Critic's learning rate $\alpha$: \{0.001, 0.003, 0.01, 0.03, 0.1, 0.3\}
	\item Actor's learning rate: \{0.001$\alpha$, 0.003$\alpha$, 0.01$\alpha$, 0.03$\alpha$, 0.1$\alpha$, 0.3$\alpha$\}
\end{enumerate}

\paragraph{RFQI}
We trained RFQI offline with the same aforementioned 30 datasets. For each training dataset, we chose one of the other offline datasets as the validation set, and did a grid search on the hyperparameter set $\Lambda_{\text{FQI}}$ as described below. After offline training, for each dataset we chose the learned action-value function with the lowest final MSTDE on the validation dataset and deploy an $\epsilon$-greedy policy with respect to this value function to the true environment.
We ran each policy on the true environment for $T_\text{true}$ timesteps. The online run was repeated 30 times with 30 different random seeds. The expected online performance of each deployed policy was calculated as the average performance across all runs.

We used Adam optimizer to perform gradient descent steps. In both acrobot and puddle world experiments, with other hyperparameters in the agent fixed, we did grid search over the following hyperparameters resulting in 30 combinations:
\begin{enumerate}
	\item Adam optimizer learning rate $\alpha$: \{$10^{-1}, 10^{-2}, 10^{-3}, 10^{-4}, 10^{-5}$\}
	\item L2 regularization scale $\lambda_{f}$: \{$10^{-1}, 10^{-3}, 10^{-5}$\}
	\item Neural network hidden layers: \{(64, 64), (128, 128)\}
\end{enumerate}

\subsubsection{Model Construction and Distance Metrics Learning}
KNN calibration model is non-parametric during construction. Transitions are assigned to different trees based on the action at that time step.
NN model and the Laplacian representation model need to be trained though, with batch size 16 and 128 separately.

We used cross validation to pick parameters. 20\% of the transitions in the dataset were randomly set as the test set while the rest of transitions were training set. When measuring the performance of the NN model, we used the mean squared error. A smaller value is considered as a better performance. 
We measured the dynamic awareness (Equation \ref{eq:dynamics}) when evaluating the quality of the Laplacian representation, while a larger value refers to a better performance. 
During training, we tested the performance every epoch (NN model) / every 1k steps (Laplacian representation), if the test performance reduces for 3 consecutive tests, we cut off training. Otherwise, the NN model is trained for at most 100 epochs, and the Laplacian representation is trained for at most 30k steps.

\begin{equation}
\begin{aligned}
\text{Dynamics Awareness}=\tfrac{\sum^N_{i}||\phi_i - \phi_{j\sim U(1,N)}|| - \sum^N_{i} ||\phi_i-\phi'_i||}{\sum^N_{i}||\phi_i - \phi_{j\sim U(1,N)}||}
\end{aligned}
\label{eq:dynamics}
\end{equation}

The chosen parameter for learning Laplacian representations in both environments are as following.
In Acrobot experiment 1, 3, and 5, we swept and chose $\kappa=0.8$, $\beta=5$, $\zeta=0.5$, $\alpha=0.00003$, and the length of trajectory for picking close states was set to 20. In Experiment 2, $\zeta$ was changed to 0.05. In Puddle World experiment 1 and 5, we used $\kappa=0.8$, $\beta=5$, $\zeta=0.05$, $\alpha=0.0003$, and the length of trajectory for picking close states was set to 10. In Experiment 2, the trajectory length was decreased to 5 and $\alpha$ was decreased to 0.0001. For Cartpole experiment, we used $\kappa=0.8$, $\beta=5$, $\zeta=0.05$, $\alpha=0.00003$, and a trajectory with 50 steps.

We tested both using raw states and Laplacian representations as input in NN calibration model. In both cases, the inputs were scaled to [-1, 1] according to the largest and smallest number of the corresponding feature in the dataset. 
The chosen learning rate for training NN model with raw state input and laplacian representaion in Acrobot were both 0.0003. In Puddle World, the learning rates was 0.0001 when using raw state and 0.0003 when using the Laplacian representation.

When picking hyperparameter with the calibration model, we average the performance over 10 inner runs ($n_{run}=10$).

\section{Automatic Hyperparameter Optimization Experiment}
\label{section-cem-experiment-appendix}

In this section we first outline the CEM algorithm and then provide additional experimental details for the experiment using different hyperprameter optimization algorithms. 

\subsection{CEM for Hyperparameter Optimization}\label{app_cem}

\newcommand{\ntop}{N$_\text{top}$}
\begin{algorithm}[htb!]
	\caption{CEM for Hyperparameter Optimization}
	\label{alg:AltCEM}
	\textbf{Input:}
	\textit{lower} and \textit{upper} ranges for each hyperparameter
	
	\textbf{Set:}
	{maxiterations} = $100$, $\alpha = 0.1$ learning rate, tolerance = $10^{-1}$,
	N = 32 number of hyperparameters to sample each iteration,  
	\ntop = $5$ 
	
	\begin{algorithmic}[1]
		\STATE $\mu \gets $ mean of ranges (for truncated multi-variate normal (TMVN))
		\STATE $C \gets $ diagonal matrix with range width on diagonal
		\STATE $\mu_\text{ave} \gets 0$, $\mu_{\text{old-ave}} \gets 2$ tolerance, \quad iteration $\gets 1$
		\WHILE {iteration $<$ {maxiterations} \& $ \|\mu_\text{ave} - \mu_\text{old-ave}\| >$ tol }
		\STATE Sample N hyperparameter settings from TMVN(\textit{$\mu$}, \textit{C}, \textit{lower}, \textit{upper})
		\STATE Evaluate the N hyperparameter settings, using 3 runs
		\STATE Compute mean $\mu_{\text{top}}$, covariance $C_{\text{top}}$ of the top N$_\text{top}$ 
		\STATE $\mu \gets (1-\alpha) \mu + \alpha \mu_{\text{top}}$
		\STATE $C \gets (1-\alpha) C + \alpha C_{\text{top}}$
		\STATE $\mu_\text{old-ave} = \mu_\text{ave}$
		\STATE $\mu_\text{ave} = \mu_\text{ave} + \frac{1}{\text{iteration}} (\mu - \mu_\text{ave})$
		\STATE $\text{iteration} \gets \text{iteration} + 1$
		\ENDWHILE
		\STATE return $\mu$
		
	\end{algorithmic}
\end{algorithm}

We initialize a truncated multi-variate normal (TMVN) distribution, for the given ranges for the hyperparameters, at the center of these ranges with a wide initial covariance. 
On each iteration, we 1) sample N = 32 hyperparameter settings from this TMVN, 2) obtain noisy estimates of the performance of these hyperparameter settings using 3 runs $\hat{f}(\lambda_1), \ldots, \hat{f}(\lambda_N)$, 3) select the top 5 (approximately top 10\%) hyperparameter settings, and 4) increase their likelihood under the TMVN. We increase the likelihood by updating the mean and covariance towards the mean and covariance of these top 5 hyperparameter settings. This stochastic update slowly concentrates the mean around high-valued hyperparameters.

We run the optimization for a maximum of 100 iterations. However, we also allow for early stopping if fewer iterations are required. If the mean has stopped changing earlier than 100 iterations, then we can stop the optimization early. Using the change in mean, however, might stop too early, especially in early training, due to the noise in performance estimates. Instead, we use a long-run average and compare the change in this average, to obtain a stopping condition. We still return $\mu$ rather than the long-run average, because $\mu$ reflects the mean we have concentrated around. 

This algorithm is designed for continuous hyperparameters. We can, however, also apply it to certain discrete hyperparameters.
The discrete hyperparameters are of two types: ordered and unordered. 
For unordered discrete hyperparameters, the CEM procedure should simply be run for each of the discrete hyperparameter settings, with CEM picking amongst the other ordered hyperparameters. This is because CEM is faster than grid search when there is the ability to generalize between hyperparameters. Without order, we will not have generalization.

For ordered discrete hyperparameters, such as the number of tilings, we optimize them as continuous hyperparameters, but round to the nearest integer. For example, the range for number of tilings 1, 2, 4 and 8 could be converted to $h \in [0,3]$ where $2^h$ is the number of tilings and when we sample $h \in [0,3]$ we round to an integer before querying for the performance of that hyperparameter. We can see this as the agent taking in the hyperparameter $h \in [0,3]$ and itself doing the rounding and exponentiation as part of the agent. This transformation allows us to use generalization to reason about if the agent prefers a smaller or bigger number of tilings. 

%
%

\subsection{Experimental Details for Automatic Hyperparameter Optimization}
In Experiment 5, when CEM was used, we tuned the temperature $\tau$ and learning rate $\alpha$ of the Sarsa agent as continuous values in the ranges $[0.0001, 5.0]$ and $(0.0, 0.1]$ respectively for Acrobot, and $[0.0001, 10.0]$ and $(0.0, 1.0]$ respectively for Puddle World. We ran the CEM experiment once on each of the 30 datasets, to finally get 30 hyperparameters tuned by CEM. The CEM box plot in Figure \ref{fig:exp_cem} corresponds to the true performance of these 30 hyperparameters in the real environment. We ran CEM experiment for 100 iterations in Acrobot, and 30 iterations in Puddle World. Each iteration samples 32 hyperparameters which were all evaluated for 5 runs to get the performance measure of each hyperparameter within that iteration. 

We used an open-source package for the Bayesian optimization \citep{bayesopt}, it takes the gaussian process for optimizing the hyperparameter setting. The datasets and hyperparameter ranges were kept the same as in CEM experiment. For the optimization strategy itself, we chose Upper Confidence Bounds as the acquisition method and ran for 200 iterations. The confidence value in Upper Confidence Bounds, which controls the level of exploration, was set to 2.576. The queue is initialized with 5 random samples.

The best random search baseline shows the best performance among all uniformly sampled hyperparameters. The number of sampling was kept the same as the number of iterations. The range of hyperparameter was the same as the range used by CEM and Bayesian optimization.

We also compared CEM with grid search. Both used the same calibration model.
To compare against the CEM results in Acrobot (LHS in Figure \ref{fig:exp_cem}), we did a grid search over the following hyperparameters:
\begin{enumerate}
	\item Adam optimizer learning rate $\alpha$: \{0.001, 0.003, 0.01, 0.03, 0.1\}
	\item Softmax temperature $\tau$: \{0.0001, 1, 2, 3, 4, 5\}
\end{enumerate}

We kept the following hyperparameters fixed
\begin{enumerate}
	\item Adam optimizer momentum $\beta_1$ = 0.0
	\item Optimistic weight initialization = 0.0
	\item Eligibility trace parameter $\lambda_{e}$ = 0.8
\end{enumerate} 

To compare against the CEM results in Puddle World (RHS in Figure \ref{fig:exp_cem}), we did a grid search over the following hyperparameters:
\begin{enumerate}
	\item Adam optimizer learning rate $\alpha$: \{0.001, 0.003, 0.01, 0.03, 0.1\}
	\item Softmax temperature $\tau$: \{1, 2, 4, 6, 8, 10\}
\end{enumerate}

We kept the following hyperparameters fixed
\begin{enumerate}
	\item Adam optimizer momentum $\beta_1$ = 0.0
	\item Optimistic weight initialization = 0.0
	\item Eligibility trace parameter $\lambda_{e}$ = 0.1
\end{enumerate}


\section{Additional Experimental Results} \label{apdx:more_results}
Additional experiments results will be provided in this section. Including the result when removing the Laplacian representation distance metric (Section \ref{apdx:calibration_raw}), experiment result for actor-critic on Acrobot and Puddle World (Section \ref{apdx:actor_critic}), additional discussion on the failure case (Section \ref{apdx:cartpole}), and CEM performance on Acrobot (Section \ref{apdx:cem_acrobot}).

\subsection{Calibration Model with Raw States}\label{apdx:calibration_raw}

\begin{figure}[t]
	\centering
	\includegraphics[width=.8\linewidth]{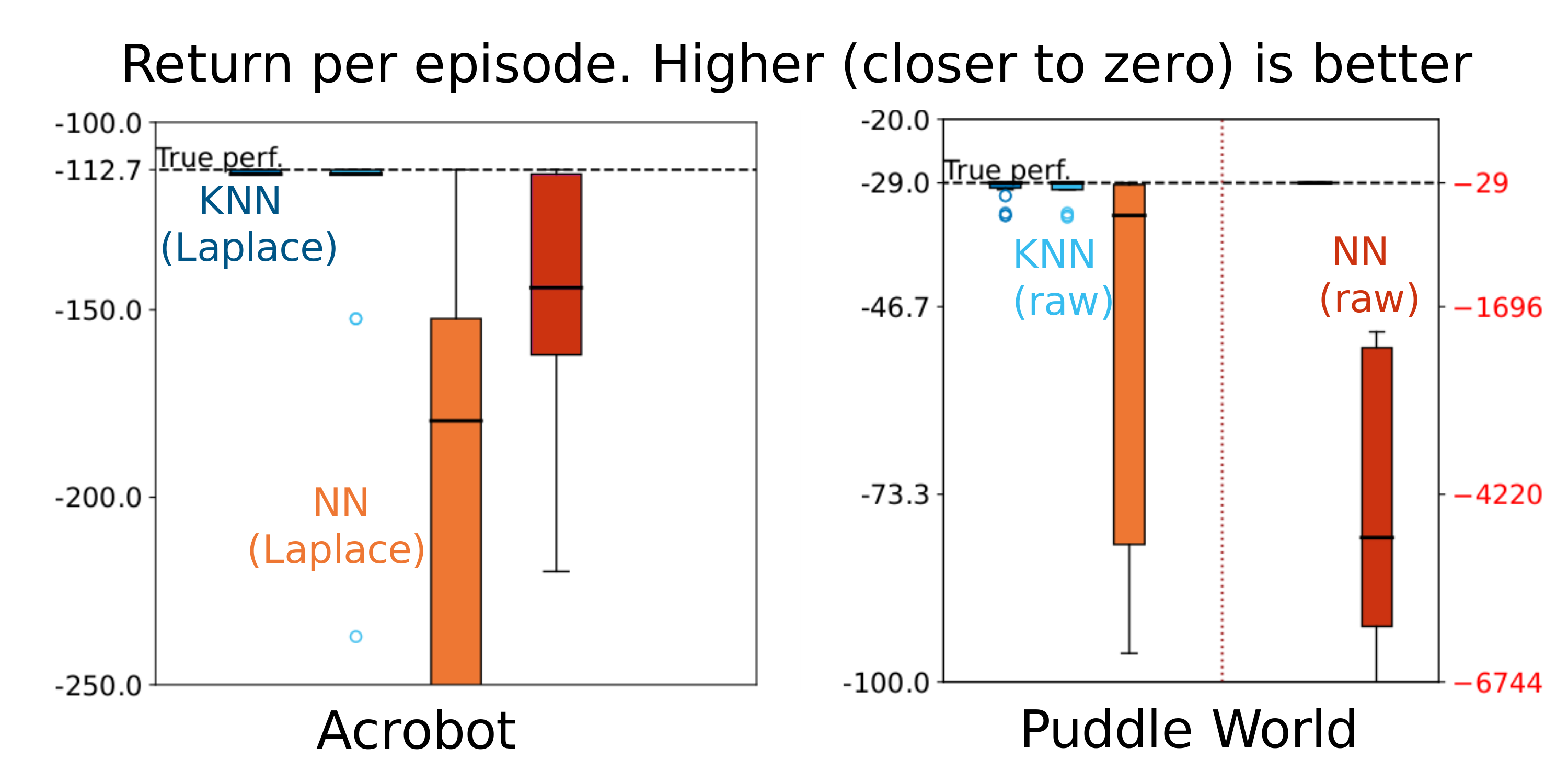}
	\caption{\textbf{Replacing Laplacian representation with raw state.} Each subplot shows the result of comparing using raw state and Laplacian representation in KNN based and NN based calibration model. In both subplots, the \textbf{higher return is better}.} 
	\label{fig:calibration_raw}
\end{figure}

Using the Laplacian representation gives the kd-tree a more reliable distance metric to search for the nearest neighbor regarding the transition dynamics. We also empirically checked the calibration model performance when removing the Laplacian representation and using the L2 distance on raw states as the distance metrics. It turns out there exist more outliers in this case (Figure \ref{fig:calibration_raw}).
Regarding NN based calibration model, the one using Laplacian representation input performs better on Puddle World, while the other performs better on Acrobot. In both cases, the KNN based calibration model performs better than the NN-based calibration model.

\subsection{Actor-Critic Agent on Acrobot and Puddle World} \label{apdx:actor_critic}

\begin{figure}[t]
	\centering
	\includegraphics[width=.8\linewidth]{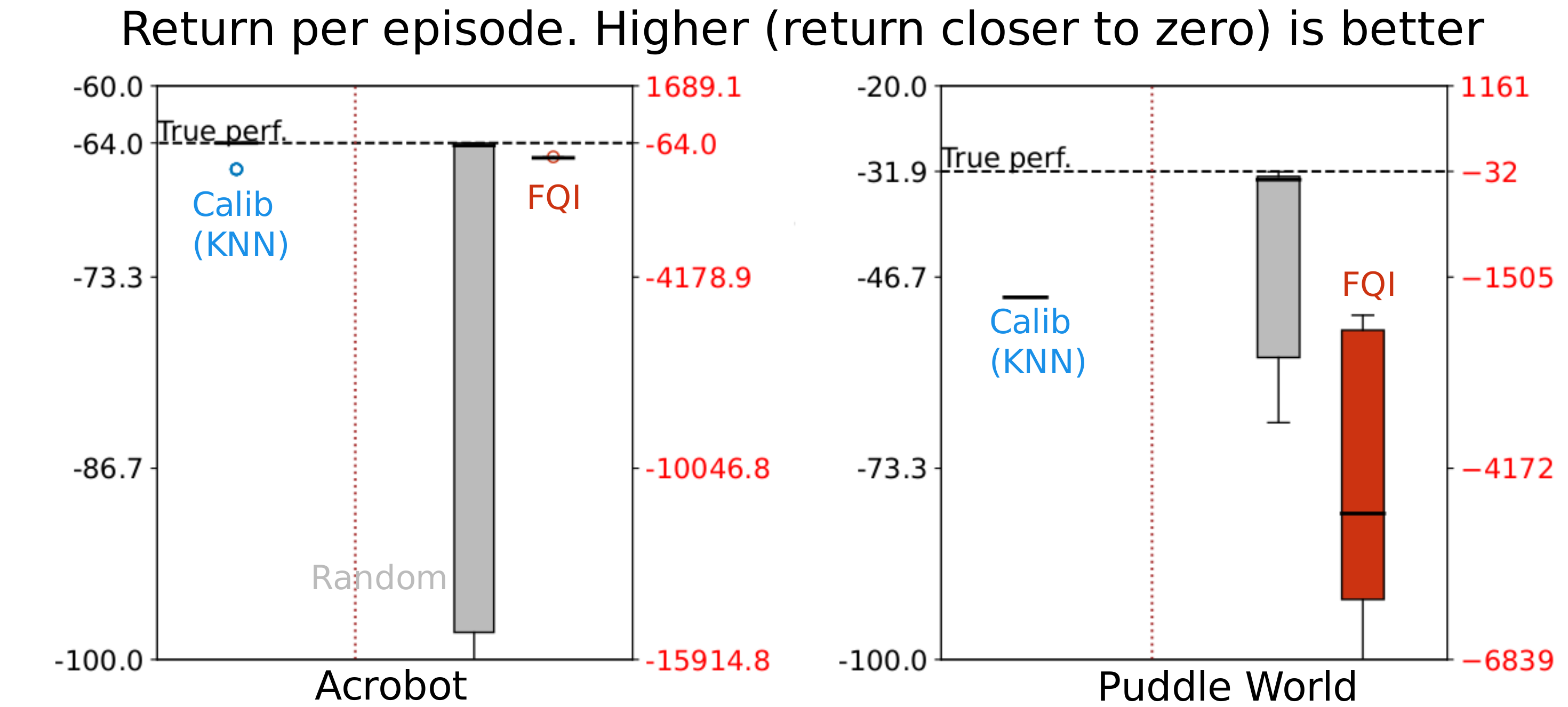}
	\caption{\textbf{Calibration model hyperparameter transfer with Actor-Critic agent.} Each subplot shows the calibration model performance with Actor-Critic agent, compared against FQI and a random baseline. The dotted horizontal line indicates the performance of Actor-Critic agent with the best hyperparameter setting in the sweep in the deployment environment. In both subplots, the \textbf{higher return is better}.} 
	\label{fig:actor_critic}
\end{figure}
Two agents were tested in our experiment, Expected Sarsa and Actor-Critic. This section provides calibration model performance when using Actor-Critic agent on Acrobot and Puddle World.
The chosen parameter performance has a relatively larger difference than Expected Sarsa, but the hyperparameter transfer with calibration model still works better than the random baseline and FQI policy transfer. The performance is given in Figure \ref{fig:actor_critic}.

\subsection{Additional Discussion on Cartpole}\label{apdx:cartpole}

In Experiment 4, we used Noisy Cartpole as a continuing learning problem. Thus, when we collected a dataset with a near-optimal policy, there exist 2 problems: (1) most of the transition has reward=0, and (2) the dataset almost does not include the failure case that the agent usually sees at the early-learning stage.
There are 2 failure types in Cartpole: one is when the pole drops (\emph{angular failure}), and the other is the cart goes too far and out of the range (\emph{positional failure}).
The failures under the near-optimal policy are mostly positional failures. However, the failure that the agent sees most often at the early learning stage is the angular failure. So with the missing failure case, it is hard for the agent to obtain useful reward information when learning with the calibration model trained with this dataset.

\subsection{Fitted-Q Iteration with Neural Network on Acrobot and Puddle World}\label{apdx:cem_acrobot}

\begin{figure*}
	\centering
	\includegraphics[width=.8\linewidth]{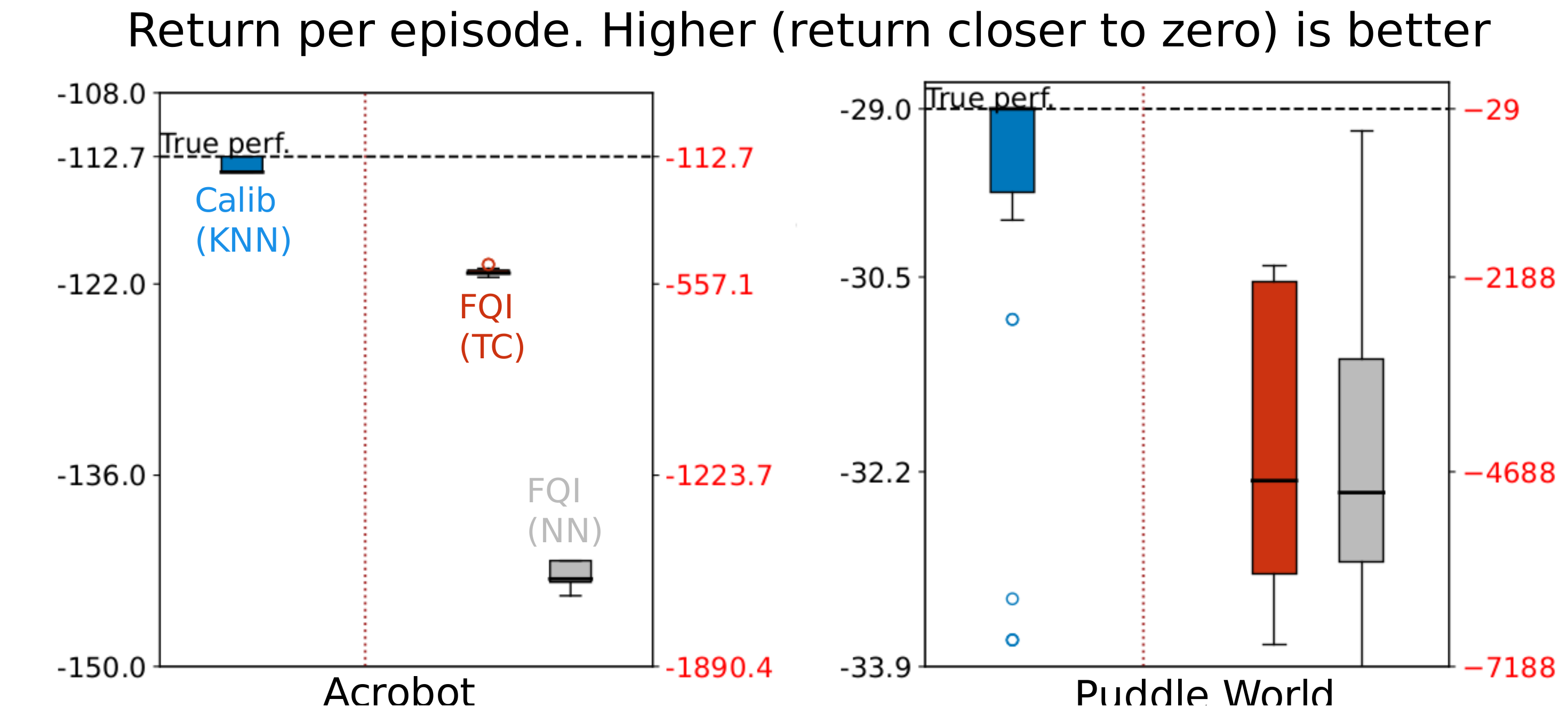}
	\caption{\textbf{Performance of FQI with linear and nonlinear function approximators.}} 
	\label{fig:fqi_nn}
\end{figure*}

We tested 2 settings in Fitted-Q Iteration, using tile coding with linear function approximator and using raw state with nonlinear function approximator. We report the first setting in Section \ref{exp_sec} as it is consistent with the setting in Expected Sarsa. When comparing both settings, the linear function approximator performs better in Acrobot, while the nonlinear function approximator performs better in Puddle World. However, when comparing them with the KNN based calibration model as baselines, both settings perform worse. The plots are shown in Figure \ref{fig:fqi_nn}.

%
%
%
%

\end{document}